\newif\ifarxiv
  \newcommand{\vspaceshrink}[2][0mm]{\vspace{-#1}}
  \newcommand{\smaller}[1]{{\normalsize #1}}
  \newcommand{\adjustleftskip}[2][-4mm]{\setlength{\leftskip}{#1}}
  \newcommand{\adjustitemsep}[2][4pt]{\setlength{\itemsep}{#1}}
  \newcommand{\adjusttabcolsep}[2][5pt]{\setlength{\tabcolsep}{#1}}
  \newcommand{\mymath}[1]{\begin{equation*}#1\end{equation*}}
  \newcommand{\vspaceshrink}[2][0mm]{\vspace{-#2}}
  \newcommand{\smaller}[1]{{\small #1}}
  \newcommand{\adjustleftskip}[2][-4mm]{\setlength{\leftskip}{#2}}
  \newcommand{\adjustitemsep}[2][4pt]{\setlength{\itemsep}{#2}}
  \newcommand{\adjusttabcolsep}[2][5pt]{\setlength{\tabcolsep}{#2}}
  \newcommand{\mymath}[1]{$#1$}
\title{Degrees of Freedom for Linear Attention: \\Distilling Softmax Attention with Optimal Feature Efficiency}
\author{%
    Naoki Nishikawa\thanks{The University of Tokyo, RIKEN AIP. 
    \texttt{nishikawa-naoki259@g.ecc.u-tokyo.ac.jp}, 
    \texttt{higuchi-rei714@g.ecc.u-tokyo.ac.jp}, 
    \texttt{taiji@mist.i.u-tokyo.ac.jp}.}
    \and
    Rei Higuchi\samethanks[1]
    \and
    Taiji Suzuki\samethanks[1]
    \vspace{3mm}
}
\begin{document}

\maketitle

\begin{abstract}
    Linear attention has attracted interest as a computationally efficient approximation to softmax attention, especially for long sequences. 
    Recent studies have explored distilling softmax attention in pre-trained Transformers into linear attention.
    However, a critical challenge remains: 
    \emph{how to choose the feature dimension that governs the approximation quality}. 
    Existing methods fix this dimension uniformly across all attention layers, overlooking the diverse roles and complexities of them.
    In this paper, we propose a principled method to automatically determine the feature dimension in linear attention using the concept of statistical \emph{degrees of freedom}, which represent the effective dimensionality of the inputs.
    We provide a theoretical bound on the approximation error and show that the dimension chosen by our method achieves smaller error under a fixed computational budget.
    Furthermore, we introduce an efficient layerwise training strategy to learn nonlinear features tailored to each layer. 
    Experiments on multiple pre-trained transformers demonstrate that our method improves the performance of distilled models compared to baselines without increasing the inference cost.
    Our findings also provide insight into how the complexity of the attention mechanism evolves across layers.
\end{abstract}

\section{Introduction}

Transformers have become the standard for sequence modeling across diverse domains 
such as natural language processing \citep{vaswani2017attention}, computer vision \citep{dosovitskiy2020image}, and speech processing \citep{dong2018speech}. 
A key factor in their success is the attention mechanism, which effectively aggregates the information from input tokens. 
However, the standard softmax attention requires computing pairwise interactions between all tokens in a sequence, 
resulting in quadratic time and memory complexity with respect to sequence length.
This scalability issue poses significant challenges for large-scale applications.

To address this limitation, numerous efforts have been made to design more efficient alternatives. 
One prominent approach is \emph{linear attention}, 
which approximates softmax attention by replacing the kernel between queries and keys with an inner product of finite dimensional features.
This reduces both time and memory complexity to linear in the sequence length, enabling scalable inference. 
The idea was initially proposed by~\citet{katharopoulos2020transformers}, 
and has since been extended in subsequent works~\citep{peng2021random, choromanski2021rethinking, qin2022cosformer}. 
Recent architectures based on state space models (SSMs), such as Mamba~\citep{gu2023mamba}, 
are also closely related to linear attention~\citep{wang2024mamba, han2024demystify}.

Since linear attention is an approximation of softmax attention, 
one can directly distill the pre-trained softmax attention into linear attention. 
Despite prior studies \citep{chen2024dijiang,wang2024mamba} have indeed achieved some success in this approach,
a critical challenge remains unresolved:
\vspaceshrink{2mm}
\begin{center}
    \emph{How should we choose the feature dimension that governs the approximation quality?} 
\end{center}
\vspaceshrink{2mm}
Most existing works pre-determine the feature dimensions and fix them across all layers, 
which ignores the diverse functional roles and input distributions of different attention layers. 
As observed in prior studies \citep{arora2018stronger,ravichandran2019using,suzuki2020spectral,massaroli2024laughing}, 
the complexity of the roles played by each layer varies greatly.
It is thus reasonable to expect that selecting the feature dimension adaptively, 
in accordance with each layer's complexity, can improve the performance of the distilled model.

In this paper, we propose a principled approach to determining the feature dimension in linear attention.
We begin by theoretically deriving a bound on the approximation error 
and show that the number of features required to approximate the original attention kernel is governed by the statistical \emph{degrees of freedom~(DoF)}.
Based on this insight, we develop a method to automatically select the feature dimension for each layer by estimating its DoF.
Because the DoF captures the effective dimensionality of the input distribution, our method enables a data-adaptive and layer-specific allocation of feature dimensions.

Although our theoretical analysis identifies the optimal feature distribution, this distribution is generally intractable to compute.
To obtain linear attention that achieves optimal approximation accuracy, we introduce a training strategy for learning nonlinear feature maps.
Instead of relying on costly end-to-end training with the pre-training objective, we propose to train each layer independently.
This \emph{layerwise training} approach significantly reduces computational cost while allowing each attention layer to learn features tailored to its input distribution.

We validate our approach through extensive experiments on GPT-2 and Pythia-1B. 
Our results show that (1) the optimal feature dimension varies substantially across layers, 
(2) our method improves the accuracy of distilled models over fixed-dimension baselines, 
and (3) our layerwise feature learning yields competitive performance with significantly reduced training cost.

\begin{figure*}[t]
    \centering
    \includegraphics[width=0.99\linewidth]{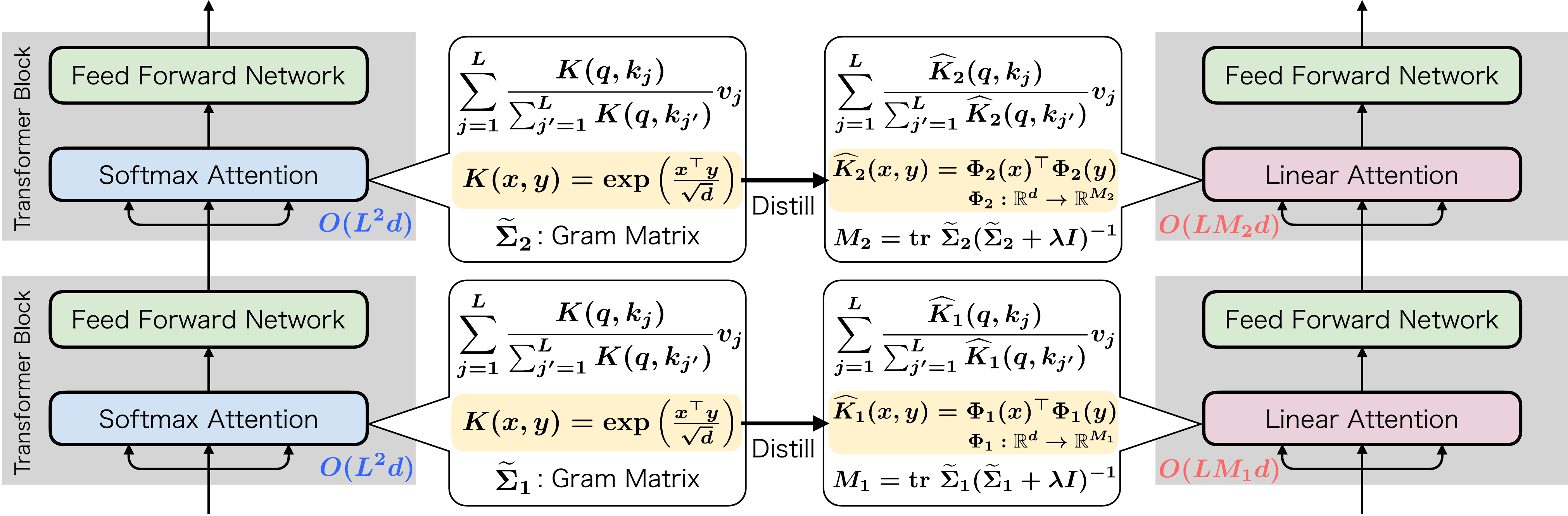}
    \caption{
        An overview of our method.
        We treat $\exp(x^\top y / \sqrt{d})$, which appears in softmax attention, as a kernel $K(x, y)$,
        and perform distillation by approximating it with the inner product of nonlinear features. 
        The required feature dimension to achieve a certain error varies depending on the input distribution, 
        and this can be calculated using degrees of freedom derived from the Gram matrix. 
        When the error level $\lambda$ is specified, 
        our method can automatically select the feature dimensions for linear attention, resulting in different feature dimensions for each layer.
    }
    \label{fig:overview}
\end{figure*}

Our contributions can be summarized as follows:
\vspaceshrink{2mm}
\begin{itemize}\adjustleftskip{-8mm}
    \item  We propose a principled method for \emph{automatically selecting the feature dimension} of linear attention based on the statistical degrees of freedom, 
        which reflect the complexity of each attention layer. 
        We provide theoretical guarantees on the approximation error under this selection scheme.
    \item We introduce an efficient \emph{layerwise training} strategy to learn nonlinear features tailored to each layer, 
        which substantially reduces the computational cost compared to end-to-end training.
    \item We empirically demonstrate that selecting feature dimensions by the proposed method improves performance of the distilled model and yields results comparable to the original models.
        The experimental results also offer insight into \emph{how attention complexity varies across layers}.
\end{itemize}

\paragraph{Other related works.}
Some studies aim to distill the attention mechanism in pre-trained Transformers into models based on linear attention.
\citet{chen2024dijiang} propose a method to distill softmax attention into linear attention by using a quasi Monte Carlo method, 
which is more accurate than standard Monte Carlo sampling. 
\citet{wang2024mamba} explore a distillation from transformers to Mamba taking into account the similarities between Mamba and linear attention. 
Furthermore, \citet{bick2024transformers} and \citet{ralambomihanta2024scavenging} propose methods for distilling Transformers into SSM-based models, 
and \citet{kasai2021finetuning} develop a method to distill Transformers into RNN-based models.
In all of them, methods for selecting feature dimensions based on the complexity of the kernels have not been investigated.

We also refer to the studies of \citet{massaroli2024laughing} and \citet{sakamoto2024data} that focus on distilling SSMs into smaller SSMs.
These studies consider selecting dimensions of states in the distilled models.
However, their methods are highly dependent on the properties of SSMs and cannot be directly applied to distillation from softmax attention to linear attention.

\citet{wang2020linformer} proposes a method to make softmax attention more efficient, and focus on the fact that attention matrices tend to be low-rank.
They propose compressing the keys and the values from $L \times d$ to $L' \times d$~($L' \ll L$), 
where $L$ is the sequence length and $d$ is the head size. 
Our method is similar in that it focuses on the effective dimension calculated from the Gram matrix of the attention kernel.
However, we use it for dimension selection in linear attention instead of compressing the keys and values.
Furthermore, our method significantly differs from theirs in that the feature dimension is automatically selected using data and takes different values for each layer.

\paragraph{Notations.}
Let $\mu$ be a probability measure on $\bR^d$. 
We define $L_2(\mu)$ as the space of functions $f: \bR^d\to\bR$ such that $\int f(z)^2\mu(dz)<\infty$.
We denote $\int f(z)\mu(dz)$ as $\bE_{z\sim\mu}[f(z)]$ and $\int f(z)g(z)\mu(dz)$ as $\ev{f, g}_{L_2(\mu)}$.
For probability measures $\mu_1$ and $\mu_2$ on $\bR^d$, we denote the product measure as $\mu_1\otimes\mu_2$.
For a bounded linear operator $A: X \to Y$, we denote the operator norm as  $\norm{A}_\mathrm{op}$.
For $m\in\bN$, we define $[m]\coloneqq \set{1, 2, \ldots, m}$.

\section{Attention Mechanism and its Linearization}\label{sec:preliminary}

In this section, we first introduce the regular attention mechanism, 
and then explain linear attention, which is an approximation of the regular one.
For simplicity, in this section, we explain only single-head cases, 
but the description can be easily extended to multi-head cases as well.
Moreover, we focus on unidirectional cases, 
which are commonly used in language models.

\paragraph{Attention mechanism.}
Let $[x_1, x_2, \ldots, x_L]$ be a sequence of $L$ vectors~(called \emph{tokens}), where $x_i \in \mathbb{R}^d$ is the $i$-th vector. 
The \emph{attention mechanism} computes a new sequence $[y_1, y_2, \ldots, y_L]$ 
by taking a weighted sum of the projected tokens as follows:
\begin{equation}\label{eq:regular-attn}
    y_i=\sum_{j=1}^i \frac{K(q_i, k_j)}{\sum_{j'=1}^i K(q_i, k_{j'})} v_j, 
    \quad K(x, y):=\exp(x^\top y / \sqrt{d}),
\end{equation}
where 
$
    k_i := W^K x_i, 
    q_i := W^Q x_i,
    v_i := W^V x_i,
$
which are called the \emph{key}, \emph{query}, and \emph{value}, respectively, 
and $W^K, W^Q, W^V \in \mathbb{R}^{d \times d}$ are the learnable parameters.
The kernel $K(x, y)$ is referred to as the \emph{attention kernel}.

The time and memory complexity of the attention mechanism is $\cO(L^2d)$ and $\cO(L^2 + Ld)$, respectively.
Since we need to compute the inner products between every pair of keys and queries, 
the cost increases quadratically with the sequence length $L$.
Although the attention mechanism is powerful, 
this becomes a drawback when dealing with the long sequences typical in language and speech.

\paragraph{Linear attention.}
To deal with large computational cost, in \emph{linear attention}, 
the computation of \eqref{eq:regular-attn} is simplified by approximating $K(x, y)$ with a inner product of finite dimensional feature maps.
Specifically, we suppose that $K$ has a form $K(x, y) = \bE_{z\sim\tau}[\phi(x; z)\phi(y; z)]$, 
where $\tau$ is a probability measure and $\phi(\cdot; z): \bR^d\to\bR$ is a feature map.
Then, we can approximate $K$ using finite i.i.d. samples $z_1, \ldots, z_M\sim \tau$, 
i.e., $K(x, y) \approx \frac{1}{M}\sum_{m=1}^M \phi(x; z_m)\phi(y; z_m) = \Phi(x)^\top \Phi(y)$, 
where $\Phi(x)=\frac1{\sqrt M} [\phi_1(x), \ldots, \phi_M(x)]^\top$.
Linear attention approximates the output of \eqref{eq:regular-attn} 
by replacing $K(q_i, k_j)$ with $\Phi(q_i)^\top\Phi(k_j)$.

This simplification drastically reduces the computational cost. 
Specifically, we can compute the outputs in the following form:
\begin{equation*}
    \hat y_i 
    = \sum_{j=1}^i \frac{\Phi(k_j)^\top\Phi(q_i)}{\sum_{j'=1}^i \Phi(k_{j'})^\top\Phi(q_i)} v_j
    = \frac{B_i^\top \Phi(q_i)}{A_i^\top \Phi(q_i)},
    \quad
    A_i := \sum_{j=1}^i \Phi(k_j), 
    \quad
    B_i := \sum_{j=1}^i \Phi(k_j)v_j^\top.
\end{equation*}
Since $A_i$ and $B_i$~($i=1, \ldots, L$) can be computed recursively over $i$, 
the computational complexity with respect to input length $L$ is significantly reduced from quadratic to linear:
the time and memory complexities become $\cO(LMd)$ and $\cO(LM+Ld+Md)$, respectively.

There are multiple options for the feature map $\Phi$. 
The simplest approach involves using the decomposition
$
    \exp(\frac{q^\top k}{\sqrt d})
    \!=\! \exp(\frac{\|q\|^2}{2\sqrt d})\!\exp(\frac{\|k\|^2}{2\sqrt d})
    \!\exp(\!-\frac{\|q - k\|^2}{2\sqrt d}\!),
$
and subsequently applying Bochner's theorem to break down the final component, 
$\exp\qty(-\frac{\|q - k\|^2}{2\sqrt d})$, into random Fourier features.
While this method appears convenient, 
it suffers from learning instability caused by the appearance of negative values in $\sin$ and $\cos$ functions.
To address this issue, \citet{choromanski2021rethinking} propose using the decomposition of
$
    \exp(q^\top k / \sqrt d)
    = \bE_{z\sim\mathcal{N}(0, \bI_d)}[
    \phi(q; z)\,\phi(k; z)
    ],
$
where
$
    \phi(x; z) = \exp(\frac{z^\top x}{d^{1/4}} - \frac{\norm{x}^2}{2\sqrt d}).
$
This feature map is referred to as the \emph{Positive Random Feature~(PRF)}. 
Because its values are consistently positive, it facilitates more stable learning.

\section{Optimal Distillation from Softmax Attention to Linear Attention}\label{sec:method}

In this section, we develop a theoretical framework 
and a practical algorithm to select feature dimensions for linear attention, 
followed by a method to train these features efficiently.

\subsection{Theoretical Background: Degrees of Freedom for Kernel Approximation}

In this subsection, we lay the theoretical groundwork for selecting the optimal feature dimension in linear attention. 
Our analysis builds on kernel approximation theory and introduces the concept of statistical degrees of freedom (DoF), 
which quantifies the effective dimensionality required to approximate the attention kernel.

\paragraph{Finite dimensional approximation of kernel.}
To discuss generally, we consider a continuous positive definite kernel $K:\bR^d\times\bR^d\to\bR$, 
and suppose that $K$ admits the representation
$
    K(x, y) = \bE_{z\sim\tau}[\phi(x; z)\phi(y; z)],
$
where $\tau$ is a probability measure on a measurable set $\scZ$, and $\phi:\bR^d\times\scZ\to\bR$ is a feature map.
We further assume that the function $x\mapsto K(x, x)$ is integrable with respect to $\rho$, 
and that the map $(x, z)\mapsto \phi(x; z)$ is square-integrable with respect to $\rho\otimes\tau$.

\ifarxiv
    The simplest way to approximate $K$ is use empirical kernel
\else
    The simplest way to approximate $K$ is to use the empirical kernel $K'$ defined as
\fi
$
    K'(x, y) = \frac1M\sum_{m=1}^M \phi(x; z'_m)\phi(y; z'_m),
$
where $z'_1, \ldots, z'_M$ are i.i.d. samples from $\tau$.
According to the strong law of large numbers, $K'$ converges to $K$ almost surely as $M\to\infty$ for any fixed $x$ and $y$.
\citet{rahimi2007random} show that, for any translation-invariant kernel $K$ and compact set $\scZ\subset\bR^d$,
$K'$ uniformly converges to $K$ at a rate of $\cO(\log M/\sqrt M)$.

Rather than focusing on the global error bound over a compact set, 
we strive to achieve a more accurate approximation by \emph{leveraging the intrinsic structure of the input vectors}. 
In other words, we assume that the input vectors $x$ and $y$ are generated from a probability measure $\rho$ on $\bR^d$, 
and adjust the approximation scheme utilizing the information of $\rho$.
To this end, we consider approximating $K$ using i.i.d. samples $z_1, \ldots, z_M$ 
drawn from a distribution with density $q$ w.r.t. the measure $\tau$:
\vspaceshrink{1mm}%
\begin{equation*}
    \widehat K (x, y)= \frac1M\sum_{m=1}^M \frac{1}{q(z_m)}\phi(x; z_m)\phi(y; z_m)
    = \Phi(x; z)^\top\Phi(y; z),
\end{equation*}
where $\Phi(x; z) = (M\cdot q(z))^{-1/2}\phi(x; z), z=[z_1, \ldots, z_M]^\top$.
The kernel $\widehat K$ defines a finite-dimensional Reproducing Kernel Hilbert Space~(RKHS) $\widehat\scH$, 
which is expected to be an approximation of the RKHS $\scH$ defined by $K$.
\citet{bach2017equivalence} analyze the approximation error between $\scH$ and $\widehat\scH$, 
and show that it is characterized by the value $N_{q, \lambda}~(\lambda>0)$ defined as
\begin{equation*}
    N_{q, \lambda} = \sup_{z\in\scZ} \frac{1}{q(z)}\ev{\phi(\cdot; z), (\Sigma+\lambda I)^{-1}\phi(\cdot; z)}_{L_2(\dd\rho)}, 
\end{equation*}
where $\Sigma:L_2(\rho)\to L_2(\rho)$ is an integral operator defined by
$
    (\Sigma f)(x)\coloneqq\ev{K(x, \cdot), f}_{L_2(\rho)}.
$
Specifically, they provide the following result.
\begin{proposition}[Proposition~1 in \citet{bach2017equivalence}]\label{prop:bach}
    Suppose that $z_1, \ldots, z_m$ are i.i.d. samples from the distribution with density $q$.
    Then, for any $\delta\in(0, 1)$, if
    $
        M\geq 5 N_{q, \lambda}\log\frac{16N_{q, \lambda}}{\delta},
    $
    it holds $\frac1M\sum_{m=1}^M q(z_m)^{-1}\norm{\phi(\cdot; z_m)}_{L_2(\dd\rho)}^2\leq \frac{2\tr\Sigma}{\delta}$ and
    \begin{equation*}
        \!\sup_{f:\norm{f}_\scH\leq 1}\inf_{\norm{\beta}_2^2\leq\frac4M}
            \norm{f-\sum_{m=1}^M\frac{\beta_m}{q(z_m)^{1/2}}\phi(\cdot; z_m)}_{L^2(\rho)}^2
        \!\leq \! 4\lambda,
    \end{equation*}
    with probability at least $1-\delta$.
\end{proposition}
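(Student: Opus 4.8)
The plan is to reduce both displayed claims to a single spectral concentration statement for a reweighted empirical operator, and then read off the two guarantees. First I would introduce the reweighted empirical covariance operator $\widehat\Sigma_q \coloneqq \frac1M\sum_{m=1}^M q(z_m)^{-1}\phi(\cdot;z_m)\otimes\phi(\cdot;z_m)$ on $L_2(\rho)$, alongside the exact operator $\Sigma$. The factor $q(z_m)^{-1}$ is chosen precisely so that integrating $q(z)^{-1}\phi(\cdot;z)\otimes\phi(\cdot;z)$ against the density $q$ recovers integration against $\tau$, giving the unbiasedness $\bE_{z\sim q}[\widehat\Sigma_q]=\Sigma$. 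The heart of the argument is the whitened pair $T\coloneqq(\Sigma+\lambda I)^{-1/2}\Sigma(\Sigma+\lambda I)^{-1/2}$ and its empirical version $\widehat T\coloneqq\frac1M\sum_{m=1}^M\xi_m\otimes\xi_m$ with $\xi_m\coloneqq(\Sigma+\lambda I)^{-1/2}q(z_m)^{-1/2}\phi(\cdot;z_m)$. Two facts drive everything: $\bE[\widehat T]=T$ with $\norm{T}_\mathrm{op}\le1$, and—this is exactly where the degrees of freedom enter—the definition of $N_{q,\lambda}$ gives $\norm{\xi_m}_{L_2(\rho)}^2=q(z_m)^{-1}\ev{\phi(\cdot;z_m),(\Sigma+\lambda I)^{-1}\phi(\cdot;z_m)}_{L_2(\rho)}\le N_{q,\lambda}$ almost surely, a uniform bound on the rank-one summands, while taking traces yields $\tr T=\bE\norm{\xi_m}^2\le N_{q,\lambda}$, so $N_{q,\lambda}$ simultaneously controls the variance.

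Next I would apply a dimension-free (intrinsic-dimension) operator Bernstein inequality to $\widehat T-T$. The almost-sure bound $N_{q,\lambda}$ on the summands together with the variance proxy shows that the sample size $M\ge 5N_{q,\lambda}\log(16N_{q,\lambda}/\delta)$ forces $\norm{\widehat T-T}_\mathrm{op}\le\tfrac12$ with probability at least $1-\delta/2$. On this event, since $(\Sigma+\lambda I)^{-1/2}(\widehat\Sigma_q+\lambda I)(\Sigma+\lambda I)^{-1/2}=\widehat T+\lambda(\Sigma+\lambda I)^{-1}$ differs from the identity by exactly $\widehat T-T$, I obtain the operator inequality $\widehat\Sigma_q+\lambda I\succeq\tfrac12(\Sigma+\lambda I)$, hence $(\widehat\Sigma_q+\lambda I)^{-1}\preceq 2(\Sigma+\lambda I)^{-1}$.

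To produce the approximating combination I would take the ridge estimator $\widehat\beta=(\Psi^*\Psi+M\lambda I)^{-1}\Psi^* f$, where $\Psi\beta=\sum_{m=1}^M q(z_m)^{-1/2}\beta_m\phi(\cdot;z_m)$ so that $\Psi\Psi^*=M\widehat\Sigma_q$. The push-through identity gives the residual $f-\Psi\widehat\beta=\lambda(\widehat\Sigma_q+\lambda I)^{-1}f$; combining $(\widehat\Sigma_q+\lambda I)^{-2}\preceq\lambda^{-1}(\widehat\Sigma_q+\lambda I)^{-1}\preceq 2\lambda^{-1}(\Sigma+\lambda I)^{-1}$ with the RKHS identity $\norm{f}_\scH^2=\ev{f,\Sigma^{-1}f}_{L_2(\rho)}\le1$ controls the residual by $4\lambda$, and an analogous computation for $\norm{\widehat\beta}_2^2=\frac1M\ev{f,\widehat\Sigma_q(\widehat\Sigma_q+\lambda I)^{-2}f}_{L_2(\rho)}$ yields $\norm{\widehat\beta}_2^2\le 4/M$. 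The first displayed inequality is separate and elementary: the nonnegative quantity $\frac1M\sum_{m=1}^M q(z_m)^{-1}\norm{\phi(\cdot;z_m)}_{L_2(\rho)}^2$ has mean $\tr\Sigma$, so Markov's inequality bounds it by $2\tr\Sigma/\delta$ outside an event of probability $\delta/2$; a union bound over the two failure events closes the argument.

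I expect the main obstacle to be the concentration step. Because $\Sigma$ acts on the infinite-dimensional space $L_2(\rho)$, a naive matrix Bernstein bound would carry a spurious ambient-dimension factor, so one must invoke a dimension-free version in which $N_{q,\lambda}$ itself plays the role of the effective dimension, and one must track the numerical constants carefully enough that the stated threshold $5N_{q,\lambda}\log(16N_{q,\lambda}/\delta)$ indeed delivers $\norm{\widehat T-T}_\mathrm{op}\le\tfrac12$.
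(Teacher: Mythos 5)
Your proposal is correct and follows essentially the same route as the source: the paper does not reprove this proposition but imports it from \citet{bach2017equivalence}, and the concentration step you rely on (a dimension-free operator Bernstein bound for the whitened difference, with $N_{q,\lambda}$ serving as both the almost-sure bound on the rank-one summands and the effective dimension) is exactly what the paper restates as \cref{lem:bach}, while your ridge/push-through computation and the Markov bound reproduce the rest of Bach's argument. The only caveat is the constant tracking you already flagged: with the threshold $5 N_{q,\lambda}\log(16N_{q,\lambda}/\delta)$ the original argument takes $t=3/4$ rather than $t=1/2$, which still yields $(\widehat\Sigma_q+\lambda I)^{-1}\preceq 4(\Sigma+\lambda I)^{-1}$ and hence the stated $4\lambda$ and $4/M$ bounds.
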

This proposition demonstrates that the necessary number of features $M$ to achieve the approximation error $\lambda$ is determined by $N_{q, \lambda}$, 
which depends on the density $q$.
In other words, by effectively selecting the density $q$, it is possible to achieve an approximation error of $\lambda$ with a smaller $M$.

One of the essential part of the proof of \cref{prop:bach} is to bound the error between the operators $\Sigma$~(corresponding to $K$)
and its empirical approximation $\widehat\Sigma:L_2(\rho)\to L_2(\rho)$~(corresponding to $\widehat K$) defined as
$
    (\widehat\Sigma f)(x) \coloneqq \langle \widehat K(x, \cdot), f \rangle_{L_2(\rho)}.
$
In the proof of \cref{prop:bach}, \citet{bach2017equivalence} analyze the error between $\Sigma$ and $\widehat\Sigma$ as follows.
\begin{lemma}[\citet{bach2017equivalence}]\label{lem:bach}
    Let $\Delta_\lambda\coloneqq(\Sigma + \lambda I)^{-1/2}(\Sigma - \hat\Sigma)(\Sigma + \lambda I)^{-1/2}$.
    For any $\lambda, t>0$, it holds
    \begin{equation*}
        \bP[\norm{\Delta_\lambda}_\mathrm{op} > t]
        \leq 2 N_{q, \lambda}
            \qty(1 + \frac{6}{t^2\log^2\qty(1 + \frac{Mt}{N_{q, \lambda}})})
            \exp\qty(-\frac{Mt^2/2}{N_{q, \lambda}\cdot(1+\frac t3)}).
    \end{equation*}
\end{lemma}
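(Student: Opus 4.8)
The plan is to recognize $\widehat\Sigma$ as an empirical mean of i.i.d.\ rank-one operators and then control $\Delta_\lambda$ with a Bernstein-type concentration inequality for self-adjoint operators on $L_2(\rho)$. Introduce the whitened feature $\psi_m\coloneqq(\Sigma+\lambda I)^{-1/2}\phi(\cdot;z_m)$, the rank-one operator $\tilde W_m\coloneqq q(z_m)^{-1}\,\psi_m\otimes\psi_m$, and $\bar\Sigma\coloneqq(\Sigma+\lambda I)^{-1/2}\Sigma(\Sigma+\lambda I)^{-1/2}$. Since $\widehat\Sigma=\frac1M\sum_m q(z_m)^{-1}\phi(\cdot;z_m)\otimes\phi(\cdot;z_m)$, whitening gives $(\Sigma+\lambda I)^{-1/2}\widehat\Sigma(\Sigma+\lambda I)^{-1/2}=\frac1M\sum_m\tilde W_m$, and because $z_m$ has density $q$ with respect to $\tau$ we have $\bE[\tilde W_m]=\bar\Sigma$. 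Hence $\Delta_\lambda=\frac1M\sum_{m=1}^M Z_m$ with $Z_m\coloneqq\bar\Sigma-\tilde W_m$ a sum of i.i.d., centered, self-adjoint operators, and $\bP[\norm{\Delta_\lambda}_\mathrm{op}>t]=\bP[\norm{\sum_m Z_m}_\mathrm{op}>Mt]$.

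Two moment estimates, both collapsing to the single quantity $N_{q,\lambda}$, are the heart of the argument. First, the almost-sure bound: each $\tilde W_m$ is positive with $\norm{\tilde W_m}_\mathrm{op}=q(z_m)^{-1}\norm{\psi_m}^2=q(z_m)^{-1}\ev{\phi(\cdot;z_m),(\Sigma+\lambda I)^{-1}\phi(\cdot;z_m)}_{L_2(\dd\rho)}\le N_{q,\lambda}$, directly by the definition of $N_{q,\lambda}$; combined with $0\preceq\bar\Sigma\preceq I$ this yields $\norm{Z_m}_\mathrm{op}\le N_{q,\lambda}$ (in the regime $N_{q,\lambda}\ge1$). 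Second, the variance bound: the rank-one identity $(\psi_m\otimes\psi_m)^2=\norm{\psi_m}^2\,(\psi_m\otimes\psi_m)$ gives $\tilde W_m^2=q(z_m)^{-1}\norm{\psi_m}^2\,\tilde W_m\preceq N_{q,\lambda}\tilde W_m$, so $\bE[Z_m^2]=\bE[\tilde W_m^2]-\bar\Sigma^2\preceq N_{q,\lambda}\bar\Sigma$. Consequently the total variance $V\coloneqq\sum_m\bE[Z_m^2]$ satisfies $V\preceq M N_{q,\lambda}\bar\Sigma$, whence $\norm{V}_\mathrm{op}\le M N_{q,\lambda}$ and $\tr V\le M N_{q,\lambda}\tr\bar\Sigma$.

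With these ingredients I would invoke an intrinsic-dimension Bernstein inequality for sums of independent self-adjoint trace-class operators, with effective range $R=N_{q,\lambda}$ and variance proxy $\sigma^2=M N_{q,\lambda}$, evaluated at the threshold $s=Mt$. The exponential factor then reads $\exp\qty(-\frac{s^2/2}{\sigma^2+Rs/3})=\exp\qty(-\frac{Mt^2/2}{N_{q,\lambda}(1+t/3)})$, matching the claim; the term $\log^2(1+Mt/N_{q,\lambda})$ in the prefactor is the logarithm of the ratio $s/R$ of threshold to range, and the $1/t^2$ correction is the mechanical output of that inequality. For the intrinsic-dimension prefactor I would combine the trace bound with the elementary inequality $\tr\bar\Sigma=\tr\qty((\Sigma+\lambda I)^{-1}\Sigma)\le N_{q,\lambda}$ (the degrees of freedom are an average, under the sampling law, of the quantity whose supremum defines $N_{q,\lambda}$, hence never exceed it), turning the intrinsic-dimension factor into the stated $2N_{q,\lambda}$.

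The main obstacle is not the moment computations, which are clean once the whitening is set up, but justifying the operator concentration inequality in the genuinely infinite-dimensional setting: $\Sigma$ acts on $L_2(\rho)$, so one must work with the trace-class / Hilbert--Schmidt version of the Bernstein bound. The standing assumption that $x\mapsto K(x,x)$ is $\rho$-integrable guarantees $\tr\Sigma<\infty$, so $\bar\Sigma$ is trace-class and the intrinsic dimension is finite, which is what licenses the prefactor. The secondary delicate point is the bookkeeping of constants and of the correction factor, so that the substitution $(R,\sigma^2,s)=(N_{q,\lambda},M N_{q,\lambda},Mt)$ reproduces the precise prefactor $2N_{q,\lambda}\qty(1+\frac{6}{t^2\log^2(1+Mt/N_{q,\lambda})})$ rather than merely a bound of the same shape.
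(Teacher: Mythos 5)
The paper does not actually prove this lemma: it is imported verbatim from the proof of Proposition~1 in \citet{bach2017equivalence} and restated in the appendix (as \cref{lem:bach-hp-bound}) without proof. Your reconstruction --- whitening by $(\Sigma+\lambda I)^{-1/2}$, the almost-sure bound $\|\tilde W_m\|_{\mathrm{op}}\le N_{q,\lambda}$, the variance bound $\bE[\tilde W_m^2]\preceq N_{q,\lambda}\bar\Sigma$ together with $\tr\bar\Sigma=\tr\,\Sigma(\Sigma+\lambda I)^{-1}\le N_{q,\lambda}$, and then Minsker's intrinsic-dimension Bernstein inequality for self-adjoint trace-class operators with $(R,\sigma^2,s)=(N_{q,\lambda},MN_{q,\lambda},Mt)$ --- is precisely the argument of the cited source (modulo the $\max\{1,N_{q,\lambda}\}$ edge case you already flag), so it is essentially the same proof.
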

This lemma provides the upper bound of the largest difference of eigenvalues 
between $(\Sigma + \lambda I)^{-1/2}\Sigma(\Sigma + \lambda I)^{-1/2}$ and $(\Sigma + \lambda I)^{-1/2}\widehat\Sigma(\Sigma + \lambda I)^{-1/2}$, 
which is controlled by $N_{q, \lambda}$.

\paragraph{Error bound between $K$ and $\widehat K$.}
\cref{prop:bach} shows that the elements of $\scH$ can be approximated by those of $\widehat\scH$ in terms of the $L^2(\rho)$ norm.
However, for the purpose of distilling softmax attention into linear attention,
it is essential to have a \emph{error bound between $K$ and $\widehat K$}
and, ultimately, to ensure that \emph{the output of the attention mechanism is well-approximated}.
To this end, we establish the following theorem using \cref{lem:bach}.
\begin{theorem}\label{thm:kernel-approx}
    Let $\delta\in(0, 1), \lambda>0$, and $t\in(0, 3]$.
    Suppose that 
    $M \geq \frac{4 N_{q, \lambda}}{t} \log\frac{64 N_{q, \lambda}}{\delta t}$.
    Then, the following two items hold:
    \begin{itemize}
        \item[(i)] It holds
            \begin{equation*}
                \norm{K - \hat K}_{L^2(\rho\otimes\rho)}^2
                \leq \lambda\cdot t\delta^{-1}C_K^{(1)} + t^2 C_K^{(2)},
            \end{equation*}
            with probability $1-2\delta$, where $C_K^{(1)}, C_K^{(2)}$ are constants depending on $K$.
        \item[(ii)] Let $\alpha\in[0, 1/2]$ and $h\in L^2(\rho)$. 
            We define a map $v: \bR^d\to\bR$ by $v = (\Sigma+\lambda I)^{-\alpha} h$. 
            Then, it holds
            \begin{equation*}
                \norm{\int v(x) \hat K(x, \cdot) \dd\rho(x) - \int v(x) K(x, \cdot) \dd\rho(x)}_{L^2(\rho)}
                \leq \sqrt2\qty(\lambda^{1-\alpha} + C_K^{(3)})\cdot t\norm{h}_{L^2(\rho)}, 
            \end{equation*}
            with probability $1-\delta$, where $C_K^{(3)}$ is a constant depending on $K$.
    \end{itemize}
\end{theorem}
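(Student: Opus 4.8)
The plan is to reduce both items to the single operator event $\norm{\Delta_\lambda}_\mathrm{op}\le t$, which \cref{lem:bach} guarantees with probability at least $1-\delta$ once $M$ exceeds the stated threshold (the constants are chosen precisely so that the tail bound of \cref{lem:bach} drops below $\delta$; verifying this calibration is a routine but careful computation). Throughout I write $\Sigma_\lambda\coloneqq\Sigma+\lambda I$ and use the identity $\Sigma-\hat\Sigma=\Sigma_\lambda^{1/2}\Delta_\lambda\Sigma_\lambda^{1/2}$, together with the fact that the integral operator with kernel $K-\hat K$ is exactly $\Sigma-\hat\Sigma$, so that $\norm{K-\hat K}_{L^2(\rho\otimes\rho)}^2=\norm{\Sigma-\hat\Sigma}_{\mathrm{HS}}^2$. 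I also use the trace bound $\tr\hat\Sigma\le 2\tr\Sigma/\delta$ from \cref{prop:bach}, which holds with probability $1-\delta$ and supplies the $\delta^{-1}$ factor.

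For (i) the key step is a spectral split of $\Sigma$. Let $P$ be the spectral projection of $\Sigma$ onto eigenvalues $\ge\lambda$, let $Q=I-P$, and set $W\coloneqq\Sigma-\hat\Sigma$. On the top block I expand $\norm{PWP}_{\mathrm{HS}}^2$ in the eigenbasis $\{e_i\}$ of $\Sigma$: each entry is $(W)_{ij}^2=(\sigma_i+\lambda)(\sigma_j+\lambda)\ev{e_i,\Delta_\lambda e_j}^2$, and bounding $|\ev{e_i,\Delta_\lambda e_j}|\le\norm{\Delta_\lambda}_\mathrm{op}\le t$ together with $\sigma_i+\lambda\le 2\sigma_i$ on $\mathrm{ran}\,P$ gives $\norm{PWP}_{\mathrm{HS}}^2\le 4t^2(\tr\Sigma)^2$, the source of $t^2 C_K^{(2)}$. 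On the tail block I instead use $\norm{Q\Sigma_\lambda^{1/2}}_\mathrm{op}\le\sqrt{2\lambda}$ to obtain $\norm{QWQ}_\mathrm{op}\le 2\lambda t$, and the trace-class estimate $\norm{QWQ}_{\mathrm{tr}}\le\tr(Q\Sigma Q)+\tr(Q\hat\Sigma Q)\le(1+2/\delta)\tr\Sigma$; combining these through $\norm{QWQ}_{\mathrm{HS}}^2\le\norm{QWQ}_\mathrm{op}\norm{QWQ}_{\mathrm{tr}}$ yields the $\lambda t\delta^{-1}C_K^{(1)}$ term. The two cross blocks are controlled by the same ingredients and come out $O(\lambda t^2)$, which is absorbed into the other terms using $t\le 3$. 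A union bound over the two probabilistic events gives the stated probability $1-2\delta$.

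Item (ii) is more direct. I first identify $\int v(x)K(x,\cdot)\dd\rho(x)=\Sigma v$ and $\int v(x)\hat K(x,\cdot)\dd\rho(x)=\hat\Sigma v$ from the definitions of the two operators, so the quantity equals $\norm{(\Sigma-\hat\Sigma)v}_{L^2(\rho)}$. Substituting $v=\Sigma_\lambda^{-\alpha}h$ and the factorization gives $(\Sigma-\hat\Sigma)\Sigma_\lambda^{-\alpha}=\Sigma_\lambda^{1/2}\Delta_\lambda\Sigma_\lambda^{1/2-\alpha}$, whose operator norm is at most $\norm{\Delta_\lambda}_\mathrm{op}(\norm{\Sigma}_\mathrm{op}+\lambda)^{1-\alpha}\le t(\lambda^{1-\alpha}+\norm{\Sigma}_\mathrm{op}^{1-\alpha})$ by subadditivity of $s\mapsto s^{1-\alpha}$ (here $\alpha\le 1/2$ keeps the exponent $1/2-\alpha$ nonnegative and $1-\alpha\in[1/2,1]$ makes the map concave). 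Applying $\norm{Ah}_{L^2(\rho)}\le\norm{A}_\mathrm{op}\norm{h}_{L^2(\rho)}$ with $A=(\Sigma-\hat\Sigma)\Sigma_\lambda^{-\alpha}$ and setting $C_K^{(3)}=\norm{\Sigma}_\mathrm{op}^{1-\alpha}$ gives the claim (the factor $\sqrt2$ is slack relative to this estimate); only the event $\norm{\Delta_\lambda}_\mathrm{op}\le t$ is used, hence probability $1-\delta$.

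The main obstacle is the $t^2$ rate in (i). A naive global estimate $\norm{W}_{\mathrm{HS}}^2\le\norm{W}_\mathrm{op}\norm{W}_{\mathrm{tr}}$ only produces an $O(t)$ bound, and any attempt to dominate the Hilbert--Schmidt norm by $t^2\tr(\Sigma_\lambda^2)$ or $t^2\tr\Sigma_\lambda$ diverges in infinite dimensions because of the $\lambda I$ part. The resolution, and the delicate point, is that the well-conditioned top of the spectrum must be handled by the entrywise operator-norm bound, which is quadratic in $t$, whereas the ill-conditioned tail, where that estimate would diverge, must be handled by a trace-class argument that is only linear in $t$ but carries the harmless extra factor $\lambda$ and the $\delta^{-1}$ from the empirical trace. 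Choosing the split threshold at exactly $\lambda$ is what makes both blocks finite with the correct dependence.
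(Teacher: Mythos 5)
Your proposal is correct in substance, and item (ii) follows essentially the same route as the paper: identify the two integrals with $\Sigma v$ and $\hat\Sigma v$, factor out $(\Sigma+\lambda I)^{\pm 1/2}$ around $\Delta_\lambda$, and use $\norm{\Delta_\lambda}_{\mathrm{op}}\le t$ together with $(\norm{\Sigma}_{\mathrm{op}}+\lambda)^{1-\alpha}\le \norm{\Sigma}_{\mathrm{op}}^{1-\alpha}+\lambda^{1-\alpha}$ (the paper gets the same split via a power-mean inequality, which is where its explicit $\sqrt2$ comes from). For item (i) you take a genuinely different route. The paper works directly with traces: it passes from $\tr[(\Sigma-\hat\Sigma)^2]$ to $t\,\tr[(\Sigma+\lambda I)(\Sigma-\hat\Sigma)]$ using the semidefinite ordering $\Sigma-\hat\Sigma\preceq t(\Sigma+\lambda I)$, then applies H\"older ($\tr[\Sigma(\Sigma-\hat\Sigma)]\le\tr[\Sigma]\,\norm{\Sigma-\hat\Sigma}_{\mathrm{op}}$), the operator-norm consequence of \cref{lem:our-hp-bound}, and Markov's inequality for $\tr[\hat\Sigma]$. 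Your spectral split of $\Sigma$ at threshold $\lambda$, with an entrywise $t^2$ bound on the well-conditioned block and an $\mathrm{op}\times\mathrm{tr}$ interpolation on the tail, reaches the same $\lambda t\delta^{-1}C^{(1)}_K+t^2C^{(2)}_K$ shape with different (still $K$-dependent) constants, which the statement tolerates; it costs more bookkeeping over four blocks, but every step is an elementary PSD or H\"older inequality, whereas the paper's opening trace inequality applies the ordering inside a trace of a product whose other factor is indefinite and so needs care to justify. The one piece you genuinely defer is the calibration of the sample-size condition: converting the tail bound of \cref{lem:bach} into ``$\norm{\Delta_\lambda}_{\mathrm{op}}\le t$ with probability $1-\delta$ under the stated $M$'' is not a one-line check --- the paper devotes a separate lemma (\cref{lem:our-hp-bound}) to choosing the constants $4$ and $64$ and splitting cases on the size of $N_{q,\lambda}/t$ --- so you should either carry out that computation or cite it explicitly rather than calling it routine.
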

The proof can be found in \cref{sec:approx-proof}.
Item (i) provides a bound on the $L^2(\rho\otimes\rho)$ norm of the difference between the kernels $K$ and $\widehat K$,
indicating that the approximation error is governed by $N_{q, \lambda}$.
This suggests that a careful choice of the sampling density $q$ efficiently reduce the approximation error.
Item (ii) bounds the error in kernel-based integration against an arbitrary function $v$.
Notably, the integral $\int v(x) K(x, y) \dd\rho(x)$ corresponds to the attention-weighted sum in \eqref{eq:regular-attn}\footnote{
    This correspondence becomes exact when $W^K$ is invertible. 
    Letting $\rho':=W^K_\#\rho$ denote the pushforward measure of $\rho$ by $W^K$,
    the attention-weighted sum of values is represented as $\int v(k) K(k, y) \dd\rho'(k)$ with $k=W^Kx, ~v(k) = W^V (W^K)^{-1} k$.
}.
Thus, this result guarantees that the required feature dimension $M$ for accurately approximating attention outputs is effectively controlled by $N_{q, \lambda}$.
Finally, we remark that the space $\{(\Sigma+\lambda I)^{-\alpha} h \mid h\in L^2(\rho)\}$ strictly contains $L^2(\rho)$ when $\alpha>0$,
implying that our bound applies to a broader class of value functions, including those that may not be square-integrable under $\rho$.

\paragraph{Degrees of freedom and their optimality.}
Next, we aim to optimally reducing the required value of $N_{q, \lambda}$ for a fixed $\lambda$. 
Specifically, we consider the density $q_\lambda$ defined as follows:
\begin{equation*}
    q_\lambda(z) = \frac{
        1
    }{
        \trace\Sigma(\Sigma+\lambda I)^{-1}
    }\ev{\phi(\cdot; z), (\Sigma+\lambda I)^{-1}\phi(\cdot; z)}_{L_2(\rho)}.
\end{equation*}
By setting $q=q_\lambda$, we can see that 
$
    N_{q_\lambda, \lambda} = N^\ast_\lambda \coloneqq \tr\Sigma(\Sigma+\lambda I)^{-1}.
$
The value $N^\ast_\lambda$ is called the \emph{degrees of freedom}~(DoF).

When we use the i.i.d. samples $z_1, \ldots, z_M$ drawn from a distribution with the density $q_\lambda$,
the required number of samples $M$ in \cref{prop:bach} and \cref{thm:kernel-approx} is proportional to $N^\ast_\lambda \log N^\ast_\lambda$, 
which is known to be the optimal~(\citet{bach2017equivalence}, Proposition~3).
In particular, the degrees of freedom are always smaller than $N_{1, \lambda}$.
This suggests that \emph{the approximation error can be reduced by choosing the distribution $q$ depending on the input distribution $\rho$}
compared to the case we obtain the random samples from the distribution $\tau$.

\ifarxiv

The findings above can be summarized as follows:  
\begin{finding}  
    \begin{itemize}\adjustleftskip{8mm}\adjustitemsep[0pt]{0pt}  
        \item The feature dimension $M$ should be set proportional to the degrees of freedom $N^\ast_\lambda = \mathrm{tr}(\Sigma(\Sigma + \lambda I)^{-1})$, up to a logarithmic factor.  
        \item The degrees of freedom $N^\ast_\lambda$ depend on the input distribution $\rho$, implying that the optimal feature dimension \( M \) varies across layers.  
        \item Optimal feature efficiency can be achieved by sampling $z_1, \ldots, z_M$ from $q\,d\tau$, rather than from $\tau$.  
            The density $q$ also depends on the input distribution $\rho$.
    \end{itemize}  
\end{finding}

\fi

\subsection{Feature Dimension Selection via Degrees of Freedom}\label{subsec:select-dimension}

Given that the degrees of freedom $N_\lambda^\ast$ determines the minimal required number of features, 
we now return to the attention kernel $K(x, y) = \exp\qty(\frac{x^\top y}{\sqrt{d}})$ 
and present a practical procedure to estimate it from data and use it to allocate layerwise dimensions.
The overview of our method is presented in \cref{alg:select-dimension}.
\ifarxiv\else
    \begin{algorithm}[tb]
    \caption{Selecting the Feature Dimension}
    \label{alg:select-dimension}
    \begin{algorithmic}
        \STATE {\bfseries Input:} Set $\scX$ of $T$ sequences with length $L$, sample size $J\in\bN$, tolerance $\lambda>0$, cost $C\in\bN$,
        \STATE {\quad\quad~~~} original Transformer with $S$ layers and $H$ heads.
        \STATE Fed the sequences in $\scX$ into the pre-trained Transformer and collect the queries $Q_{s,h}$ and keys $K_{s,h}$ of all layers $s\in[S]$ and heads $h\in[H]$.
        \FOR{$s=1$ {\bfseries to} $S$}
            \FOR{$h=1$ {\bfseries to} $H$}
                \STATE Randomly sample $x_1, \ldots, x_J$ from $Q_{s,h} \cup K_{s,h}$
                and compute the Gram matrix $\widetilde\Sigma_{s,h}\in\bR^{J\times J}$.
                \STATE Compute $\widetilde{N}_\lambda^{(s,h)} = \tr\widetilde\Sigma_{s,h}(\widetilde\Sigma_{s,h} + \lambda I)^{-1}$.
            \ENDFOR
            \STATE Obtain $\widetilde{N}_\lambda^{(s)} = \max_{h\in[H]}\widetilde{N}_\lambda^{(s,h)}$.
        \ENDFOR
        \STATE Set $t^{-1} = C \cdot \qty(S^{-1} \sum_{s=1}^S\widetilde{N}_\lambda^{(s)})^{-1}$.
        \STATE {\bfseries Return:} Feature dimension $M_s = \mathrm{round}(t^{-1}\widetilde{N}_\lambda^{(s)})$ for layers $s=1, \ldots, S$.
    \end{algorithmic}
\end{algorithm}
\fi

Since $N_\lambda^\ast$ is defined as the expectation over the data distribution, it cannot be computed in practice. 
Therefore, we approximate it using finite samples.
Specifically, we define $\widetilde{N}_\lambda$ as an approximation of $N_\lambda^\ast$, given by
$
    \widetilde{N}_\lambda=\tr\widetilde\Sigma(\widetilde\Sigma+\lambda I)^{-1}, 
$
where $\widetilde\Sigma: \bR^J\to\bR^J~(J\in\bN)$ is a gram matrix of $K$, 
i.e., $\widetilde\Sigma=\qty[K(x_i, x_j)]_{i\in[J], j\in[J]}$, 
which is the approximation of $\Sigma$ based on finite samples $x_1, \ldots, x_J\sim\rho$.

For the attention kernel, the inputs are the queries and keys of the attention layers. 
Therefore, we compute the approximated degrees of freedom $\widetilde{N}_\lambda$ for each layer as follows:
\begin{enumerate}\adjustleftskip[-2mm]{-7mm}\setlength{\itemsep}{1pt}
    \item Prepare $T$ sequences of length $L$, and compute queries and keys for all sequences and tokens, resulting in $LT$ queries and $LT$ keys, respectively.
    \item From the collected $2LT$ queries and keys, randomly sample $J$ elements $x_1, \ldots, x_J$.
    \item Compute approximated degrees of freedom $\widetilde{N}_\lambda$ defined above.
\end{enumerate}

The degrees of freedom obtained through this procedure vary across heads and layers, as shown in \cref{sec:experiment}.
In typical Transformer implementations, the heads within the same layer are processed using a shared tensor, so it is desirable that the feature dimensions be identical as well.
Hence, we define the degrees of freedom $\widetilde{N}_\lambda^{(s)}$ of the $s$-th layer as the maximum degrees of freedom among the heads.

According to the results of \cref{thm:kernel-approx} (excluding the logarithm), 
we set the feature dimension of the $s$-th layer to $M^{(s)} = t^{-1} \widetilde{N}_\lambda^{(s)}$ for some $t>0$.
Then, the computational cost of inference matches that of linear attention with a fixed feature dimension given by $\frac1S\sum_{s=1}^S M^{(s)} = t^{-1} \cdot \frac{1}{S} \sum_{s=1}^S \widetilde{N}_\lambda^{(s)}$. 
In our experiments, we fixed $\lambda$ in advance. 
Given a computational cost $C$, we determine the constant $t$ such that the inference cost of the model
matches the computational cost when we the feature dimension is uniformly set to $C$ across all layers,  
which leads $t^{-1}=C\cdot\qty(\frac{1}{S} \sum_{s=1}^S \widetilde{N}_\lambda^{(s)})^{-1}$.

\ifarxiv
    
\fi

\subsection{Layerwise Training of Features for Kernel Approximation}\label{subsec:learn-feature}

\cref{thm:kernel-approx} guarantees that we can attain the optimal error between $K$ and $\widehat K$ by choosing the density $q$ appropriately,
However, it is difficult to obtain such a density in practice.
To obtain features $z_i$ and $q(z_i)^{-1}(=:\alpha_i)$~($i\in[M]$) that approximate the attention kernel as accurately as our theoretical bound, 
we propose to learn them from the training data.

One possible strategy is to train the features using the same loss function as the one used in the standard pre-training of Transformers.
That said, this approach is computationally expensive, as it requires computing gradients across all layers simultaneously.
Instead, we propose to train the features in a layerwise manner.
Specifically, we consider two types of loss functions: \textbf{(i) $L^2$ loss} and \textbf{(ii) softmax loss}.
To describe the details, let us assume that the set of queries $\scQ_t = (q_{t, 1}, \ldots, q_{t, L})$ and keys $\scK_t = (k_{t, 1}, \ldots, k_{t, L})$
of $T$ sequences, each of length $L$, are given.

\paragraph{(i) $L^2$ loss.}
This loss function tries to minimize the error between the attention kernel $K$ and its approximation $\widehat K$ in terms of the $L^2$ norm.
That is, the features $z=(z_1, \ldots, z_M)$ are trained using the loss function defined as
\mymath{
    \ell(z) = \frac1{LT}\sum_{t\in[T], l\in[L], l'\in[L]}\qty(K(q_{t,l}, k_{t,l'}) - \widehat K(q_{t,l}, k_{t,l'}; z, \alpha))^2.
}

\paragraph{(ii) Softmax loss.}
We train the features to make the attention matrix and its approximation closer in terms of the cross entropy loss.
In other words, we train the features $z=(z_1, \ldots, z_M)$ defined as
\mymath{
    \ell(z) = -\frac1{LT}\sum_{t\in[T], l\in[L], l'\in[L]} p_{l'}(q_{t,l}, \scK_t) \log \widehat p_{l'}(q_{t,l}, \scK_t; z, \alpha),
}
where
$
    p_{l'}(q_{t,l}, \scK_t) = \frac{\exp(q_{t,l}^\top k_{t,l'})}{\sum_{j=1}^L \exp(q_{t,l}^\top k_{t,j})}, 
    ~\widehat p_{l'}(q_{t,l}, \scK_t; z, \alpha) = \frac{\widehat{K}(q_{t,l}, k_{t,l'}; z, \alpha)}{\sum_{j=1}^L \widehat{K}(q_{t,l}, k_{t,j}; z, \alpha)}
$.

In general, $L^2$ loss is more efficient than Softmax loss because it does not require additional nonlinear operations such as $\exp$ or normalization.  
In \cref{sec:experiment}, we compare the performance of the two losses (i) and (ii) with the case where the features are directly learned using the pre-training task.  
Then, we demonstrate that learning with our proposed loss is as effective as training using pre-training loss.

\begin{table*}[t]
    \caption{
        The dimension of features selected by our method.
        The feature dimensions of each layer are determined to match the average cost. 
        For each row, the top three largest dimensions are highlighted in bold, and dimensions smaller than the top three are emphasized with underlines.
    }
    \label{tab:dimension}
    \centering
    \smaller{\adjusttabcolsep{4pt}
    \begin{tabular}{c|c|cccccccccccccccc} 
        \hline
        Model     & Cost & 1         & 2        & 3         & 4    & 5    & 6    & 7    & 8        & 9        & 10       & 11  & 12  & 13      & 14  & 15       & 16  \\ \hline
        GPT-2     & 64   & \bf{130}  & \bf{182} & 35        & 44   & 42   & 65   & \bf{92}   & \ul{33}  & \ul{28}  & \ul{34}  & 46  & 39  & -       & -   & -        & -   \\ 
        Pythia-1B & 128  & \bf{277}  & 138      & \bf{275}  & 132  & 204  & 167  & 115       & 142      & \bf{231} & 64       & 96  & 73  & \ul{40} & 52  & \ul{34}  & \ul{9}  \\ \hline
    \end{tabular}
    }
    \vspaceshrink{3mm}
\end{table*}

\section{Experiments}\label{sec:experiment}

To evaluate the effectiveness of our method, we conduct experiments on two types of pre-trained Transformers: 
GPT-2~\citep{radford2019language} and Pythia-1B~\citep{biderman2023pythia}.
As non-linear features for the linear attention, we use PRF~\citep{choromanski2021rethinking}, which we describe in \cref{sec:preliminary}.
For dimension selection and training of features, we utilize the Wikipedia dataset\footnote{
    The dataset can be accessed through the Hugging Face library: 
    \url{https://huggingface.co/datasets/legacy-datasets/wikipedia}
}.

\vspaceshrink{1mm}
\subsection{Optimal Feature Dimensions \emph{Vary across Layers and Heads}}

\vspaceshrink{2mm}
\begin{wraptable}{r}{\ifarxiv 0.63\textwidth \else 0.65\textwidth \fi}
    \vspace{-8mm}
    \centering
    \caption{
        DoF with $\lambda=2^{-8}$ of each head.
        The row and column represent the layer and head, respectively.
        The maximum value of each layer is highlighted in bold.
    }
    \label{tab:dof}
    {\scriptsize\setlength{\tabcolsep}{3pt}
    \begin{tabular}{r|cccccccccccc}
        \hline
        Layer & 1 & 2 & 3 & 4 & 5 & 6 & 7 & 8 & 9 & 10 & 11 & 12 \\
        \hline
        1     & 4.7 & 8.1  & 4.5  & 2.0  & 3.6  & 16.2 & 4.5  & 2.6  & 4.1  & 6.4  & 5.0  & \bf{150.0} \\
        2     & 10.5 & 28.5 & 155.1 & 44.6 & \bf{173.8} & 51.1 & 9.9  & 16.5 & 17.5 & 8.1  & 88.9 & 19.5  \\
        3     & 9.9  & 8.1  & 1.8  & 2.7  & 3.4  & 2.5  & \bf{24.5} & 5.8  & 2.8  & 2.6  & 12.1 & 6.4   \\
        4     & 15.1 & 3.0  & 1.4  & 1.7  & \bf{39.8} & 12.1 & 1.6  & 1.5  & 1.5  & 2.5  & 12.6 & 1.9   \\
        5     & 1.8  & 1.8  & 11.3 & 1.8  & 42.1 & 2.3  & 4.0  & 7.1  & 22.6 & 3.3  & \bf{31.6} & 1.1   \\
        6     & 8.9  & 3.1  & 3.7  & 3.5  & 3.7  & 3.0  & 6.9  & 7.5  & 4.7  & 52.4 & 21.4 & \bf{66.6}  \\
        7     & 3.3  & 6.9  & 68.9 & 14.4 & 43.5 & 17.2 & \bf{107.4} & 6.0  & 4.3  & 1.8  & 32.2 & 3.0   \\
        8     & 3.6  & 11.9 & 3.1  & 12.3 & 7.6  & 17.5 & \bf{33.0} & 9.1  & 4.9  & 5.1  & 2.3  & 12.6  \\
        9     & \bf{24.9} & 6.5  & 19.6 & 9.3  & 5.5  & 3.2  & 10.2 & 3.1  & 5.7  & 16.7 & 22.4 & 3.9   \\
        10    & 18.5 & 13.1 & 15.9 & 3.3  & 22.7 & \bf{29.9} & 14.1 & 6.4  & 20.2 & 5.1  & 3.9  & 6.2   \\
        11    & 7.9  & 6.6  & 26.1 & 19.8 & 18.2 & 3.3  & 8.7  & 6.7  & \bf{43.2} & 2.3  & 17.1 & 13.5  \\
        12    & 5.1  & 21.8 & 22.5 & 10.6 & 30.0 & \bf{39.8} & 27.2 & 31.3 & 5.0  & 20.0 & 2.8  & 2.6   \\
        \hline
    \end{tabular}
    }
    \vspace{-3mm}
\end{wraptable}
First, we apply \cref{alg:select-dimension} to the two pre-trained models
and determine the feature dimensions for each layer.
We set the cost $C$ to be the same as the head size (64 for GPT-2 and 128 for Pythia-1B), 
following prior work that sets the feature dimension of each layer to match the head size. 
Furthermore, we set $\lambda=2^{-4}$ for GPT-2 and $\lambda=2^{-8}$ for Pythia-1B.
We present the feature dimensions selected by our method in \cref{tab:dimension}.
\ifarxiv\else
    We also show the estimated DoF for each head in GPT-2 in \cref{tab:dof}.
\fi
We summarize the key findings from these results below.

\paragraph{The evolution of effective dimensionality across layers.}
The result indicates that the effective dimensionality varies significantly across layers.
This emphasizes the difference of the complexity of the roles played by each layer.
Taking a closer look at the results, 
we observe that the feature dimensions selected by our method are generally \emph{larger in the early and middle layers},
and \emph{small in the latter layers}.
This shows that Transformers engage in complex token interactions from the shallow layers to the middle layers, 
and relatively simple processing in the later layers. 
This observation aligns with the insights obtained in previous studies on fully-connected/convolutional neural networks~\citep{arora2018stronger,ravichandran2019using,suzuki2020spectral}.
Furthermore, it is worth noting that, between the early layers and the middle layers, there are several layers with relatively small dimensions.

\vspaceshrink{3mm}
\paragraph{Comparison of DoF across heads in the same layer.}
\ifarxiv
    We also show the estimated DoF for each head in GPT-2 in \cref{tab:dof}.
    We observe that the degrees of freedom are largely different across heads as well.
\else
    We additionally observe that the degrees of freedom are largely different across heads as well.
\fi
Interestingly, \emph{most of the heads have much smaller degrees of freedom than the maximum of each layer}. 
This indicates that the heads performing complex processing are limited to a few within a single layer.

\vspaceshrink{1mm}
\subsection{Dimension Selection with Layerwise Training \emph{Enhances the Performance}}

\ifarxiv

\else
    \begin{table*}[t]
    \vspaceshrink{3mm}
    \caption{
        The results of distillation for GPT-2. 
        \emph{Fix}, \emph{DoF} and \emph{DoF + Clip} in the first column imply the strategies of dimension selection.
        For all downstream tasks, the evaluation metric is accuracy.  
        The top three highest-performing are highlighted in bold.
    }
    \label{tab:gpt2}
    \centering
    \vspace{1mm}
    \smaller{\adjusttabcolsep{4pt}
    \begin{tabular}{ll|rrrrrrr|r}
        \hline
        Strategy & Method &  PiQA &  logiQA &  ARC-E &  ARC-C & Winogrande & MMLU & WSC & Average \\
        \hline
        \hline
        \multicolumn{2}{l|}{\textbf{Original GPT-2}} & {\bf 0.5985} & 0.3103 & {\bf 0.3325} & 0.3003 & 0.5122 & 0.2789 & {\bf0.6538} & {\bf0.4266} \\
        \hline
        --- & DiJiang & 0.5065 & 0.2550 & 0.2113 & 0.2244 & 0.4846 & 0.2639 & 0.4615 & 0.3409 \\
          & Performer & 0.5468 & 0.2934 & 0.3039 & 0.2747 & 0.4996 & 0.2517 & 0.5962 & 0.3952 \\
        \hline
        Fix          & direct  & 0.5832 & {\bf0.3195} & 0.2921 & 0.2995 & {\bf0.5335} & 0.2552 & 0.6154 & 0.4141\\
                     & softmax & 0.5718 & 0.2673 & 0.2479 & {\bf0.3029} & 0.5020 & 0.2634 & 0.6154 & 0.3958\\
                     & $L^2$   & 0.5822 & {\bf0.3195} & 0.2483 & 0.2773 & 0.5107 & 0.2520 & 0.5962 & 0.3980 \\
        \hline
        DoF          & direct  & 0.5669 & 0.3011 & {\bf0.3241} & {\bf0.3012} & {\bf0.5280} & 0.2712 & {\bf0.6346} & 0.4182 \\
                     & softmax & 0.5751 & 0.3026 & 0.3224 & 0.2995 & {\bf0.5328} & 0.2564 & {\bf0.6442} & {\bf0.4190} \\
                     & $L^2$   & 0.5664 & 0.3088 & 0.2736 & 0.2824 & 0.4972 & 0.2608 & 0.5865 & 0.3965 \\
        \hline
        DoF + Clip   & direct  & {\bf0.5892} & {\bf0.3164} & 0.3136 & 0.2952 & 0.5075 & {\bf0.2993} & {\bf0.6346} & {\bf0.4223} \\
                     & softmax & {\bf0.5860} & 0.3026 & {\bf0.3401} & 0.2816 & 0.4996 & {\bf0.2832} & {\bf0.6346} & 0.4182 \\
                     & $L^2$   & 0.5822 & {\bf0.3164} & 0.2942 & {\bf0.3063} & 0.5091 & {\bf0.2799} & 0.5673 & 0.4079 \\
        \hline
    \end{tabular}
    }
    \vspaceshrink{3mm}
\end{table*}
\fi

\vspaceshrink{2mm}
We now evaluate the effectiveness of our method in distillation from softmax attention for two pre-trained models. 
In these experiments, we examine the following two claims:
(i)~selecting feature dimensions using our DoF-based method improve performance compared to using fixed dimensions, 
and (ii)~our efficient layerwise training match the performance of full end-to-end training.

We consider three types of dimension selection strategies: 
\emph{Fix} is the method that sets the feature dimensions to the same value across all layers, 
which is conventionally used in existing works. 
\emph{DoF} is the method that selects the feature dimensions using Algorithm 1. 
We also consider \emph{DoF + Clip}, in which we clamp the feature dimensions to be at most the head size. This strategy is expected to
maintain the performance of Fix, while reducing the computational cost

For training the feature maps during distillation, we consider three types of loss functions:
the first is \emph{direct} loss, which we refer to the cross-entropy loss for next-token prediction, 
which provides the best performance in the pre-training task but requires full end-to-end training.
the remaining two are \emph{softmax} loss and $L^2$ loss, 
which are the layerwise losses introduced in \cref{subsec:learn-feature}, 
and more efficient and allow independent training per layer.
We will show that the layerwise training achieves performance comparable to the direct loss, 
while significantly reducing training cost.

As comparison baselines, we evaluate against Performer\citep{choromanski2021rethinking} and DiJiang\citep{chen2024dijiang}. 
Both methods employ PRF for linear attention but do not incorporate any data-driven dimension selection. 
DiJiang additionally utilizes quasi Monte Carlo sampling of features.
The learnable parameters of DiJiang are trained on the same dataset used in distillation.

To assess downstream performance, we fine-tune all distilled models (as well as the original Transformers) 
on a suite of seven tasks: PiQA~\citep{bisk2020piqa}, logiQA~\citep{liu2020logiqa}, 
ARC-Easy/Challenge~\citep{clark2018arc}, Winogrande~\citep{sakaguchi2019winogrande}, 
WSC~\citep{levesque2011winograd}, and MMLU~\citep{hendryckstest2021}. 
Further details on the training settings and datasets are provided in \cref{sec:details-of-experiment}.

\ifarxiv
    
\fi

\vspaceshrink[1mm]{0mm}
\paragraph{Results for GPT-2.}
We first show the experimental results for GPT-2 in \cref{tab:gpt2}.
We outline the noteworthy four observations below.
\vspaceshrink[0mm]{2mm}
\begin{itemize}\adjustleftskip{-8mm}\adjustitemsep{0pt}
    \item \textbf{Enhanced performance through dimension selection:} 
        Distilled models using feature dimensions selected by \cref{alg:select-dimension}~(DoF) achieve higher or near-equal performance
        compared to those using fixed dimensions (Fix). 
        In particular, in the average performance across all tasks~(see rightmost column of Table 3),
        when using direct or softmax loss, the distilled models with DoF significantly outperforms the ones with Fix.
        This indicates that our method effectively captures the varying complexity of each layer, 
        leading to improved accuracy across downstream tasks.
    \item \textbf{Effectiveness of DoF + Clip for efficiency:} 
        The DoF + Clip strategy, which caps feature dimensions at the head size, achieves comparable or better performance than the fixed-dimension approach. 
        Notably, it also yields faster inference~(\cref{tab:inference_speed}), 
        showing that selectively reducing feature dimensions based on DoF makes the model efficient without compromising performance.
    \item \textbf{Stronger results compared to prior baselines:} 
        Our distilled models surpass both Performer and DiJiang, which use fixed-dimension linear attention and (quasi) Monte Carlo feature sampling. 
        Additionally, the average performace our distilled model is comparable to (and sometimes win) the original GPT-2 model.
        This shows that our data-adaptive, layer-specific dimension selection combined with learned features yields a more accurate approximation of softmax attention.
    \item \textbf{Efficient and effective layerwise feature training:}
        Using the softmax loss, distilled models perform comparably to those trained end-to-end with the standard pre-training loss (direct). 
        As shown in \cref{tab:training_speed}, layerwise training is more efficient than direct training. 
        Although the L2 loss yields slightly lower performance than both direct and softmax losses, 
        it still outperforms prior baselines while offering notable training efficiency gains.
\end{itemize}

\vspaceshrink{2mm}
\paragraph{Results for Pythia-1B.}
We also carried out experiments on distilling Pythia-1B.
In this experiment, the model was distilled using the softmax loss.
The results are presented in \cref{tab:pythia}.
The distilled model with feature dimensions selected by our method performs better than the model with fixed dimension on average, 
and become comparable to the original Pythia-1B.
This result indicates the efficacy of our dimension selection and layerwise training
when distilling large models.

\begin{table}[t]
    \centering
    \begin{minipage}[t]{0.51\textwidth}
        \centering
        \caption{Comparison of the inference time per sequence with 1,000 tokens for different feature dimensions.}
        \label{tab:inference_speed}
        \smaller{\setlength{\tabcolsep}{3pt}
        \begin{tabular}{l|ccc}
            \hline
                               & Fix    & DoF    & DoF + Clip \\
            \hline
            Speed (sec/sequence) & 15.589 & 15.579 & 14.662 \\
            \hline
        \end{tabular}
        }
    \end{minipage}
    \hfill
    \begin{minipage}[t]{0.46\textwidth}
        \centering
        \caption{Comparison of the consuming time per each sample during training for different loss types.}
        \label{tab:training_speed}
        \smaller{\setlength{\tabcolsep}{4pt}
        \begin{tabular}{l|ccc}
            \hline
            Type of loss & direct & softmax & $L^2$ \\
            \hline
            Speed (ms/sample) & 109.4 & 90.4 & 60.6 \\
            \hline
        \end{tabular}
        }
    \end{minipage}
\end{table}
    
\begin{table*}[t]
    \vspaceshrink{4mm}
    \caption{
        The downstream accuracy for Pythia-1B.
        We use softmax loss for distillation.
    }
    \label{tab:pythia}
    \centering
    \vspace{1mm}
    \smaller{\setlength{\tabcolsep}{4pt}
    \begin{tabular}{l|rrrrrrr|r}
        \hline
        Method &   PiQA &  logiQA &  ARC-E &  ARC-C &  Winogrande & MMLU & WSC & Average \\
        \hline
        \hline
        \textbf{Original Pythia-1B} & 0.6213 & 0.2995 & 0.3295 & 0.3072 & 0.5146 & 0.2883 & 0.6346 & 0.4279 \\
        \hline
        Fix & 0.5691 & 0.2965 & 0.2849 & 0.2679 & 0.5051 & 0.2903 & 0.5962 & 0.4014 \\
        DoF & 0.5860 & 0.3026 & 0.3110 & 0.3106 & 0.4949 & 0.2811 & 0.5962 & 0.4118 \\
        \hline
    \end{tabular}
    }
\end{table*}

\vspaceshrink{1mm}
\section{Conclusion}\label{sec:conclusion}
\vspaceshrink{2mm}

We proposed a principled method for selecting feature dimensions in linear attention, 
grounded in statistical theory on finite-dimensional kernel approximation, 
and demonstrated its effectiveness in distilling softmax attention. 
Our approach adaptively assigns dimensions to each layer 
based on the statistical degrees of freedom of attention kernel, 
and trains nonlinear features in a layerwise manner to reduce computation. 
Experiments on GPT-2 and Pythia-1B show that our method improves performance without increasing inference cost, 
highlighting the importance of layer-specific design for efficient linear attention models.

\vspaceshrink{2mm}
\paragraph{Limitation and Future Work.}
While our method assigns feature dimensions adaptively per layer, 
we use a shared dimension across all heads within a layer to maintain implementation efficiency. 
Interestingly, our analysis revealed that the DoFs vary significantly even among heads in the same layer. 
Exploiting this head-level variability—for instance, 
through head pruning or sparse feature allocation—could further enhance performance and efficiency. 
In addition, although our experiments focus on moderate-scale language models, 
applying our approach to larger models and models for other modalities remain an important direction to validate its scalability.

\section*{Acknowledgements}
NN was partially supported by JST ACT-X (JPMJAX24CK) and JST BOOST (JPMJBS2418).
RH was partially supported by  JST CREST (JPMJCR2115). 
TS was partially supported by JSPS KAKENHI (24K02905) and JST CREST (JPMJCR2015).
This research is supported by the National Research Foundation, Singapore and the Ministry of Digital Development and Information under the AI Visiting Professorship Programme (award number AIVP-2024-004). Any opinions, findings and conclusions or recommendations expressed in this material are those of the author(s) and do not reflect the views of National Research Foundation, Singapore and the Ministry of Digital Development and Information.

\newpage
\bibliography{ref}


\newpage
\appendix
\begin{center}
    {\bf \LARGE ------~Appendix~------}
\end{center}
\section{Proof of \cref{thm:kernel-approx}}\label{sec:approx-proof}

We begin by recalling the definitions of some symbols used in the proof.
\begin{itemize}\adjustleftskip{-4mm}
    \item $K:\bR^d\times\bR^d\to\bR$ is the positive definite kernel given by
        $ K(x, y) = \bE_{z\sim\tau}[\phi(x; z)\phi(y; z)] $,
        where $\tau$ is a probability measure on a measurable set $\scZ$, and $\phi:\bR^d\times\scZ\to\bR$ is a feature map.
    \item $\Sigma:L_2(\rho)\to L_2(\rho)$ is an integral operator defined by
        $ (\Sigma f)(x)\coloneqq\ev{K(x, \cdot), f}_{L_2(\rho)} $.
    \item $\widehat K: \bR^d\times\bR^d\to\bR$ is the approximation of $K$, defined as
        $ \widehat K(x, y) = \frac1M\sum_{m=1}^M \phi(x; z_m)\phi(y; z_m) $,
        where $z_1, \ldots, z_M$ are i.i.d. samples drawn from a distribution with density $q$ w.r.t. the measure $\tau$.
    \item $\widehat\Sigma:L_2(\rho)\to L_2(\rho)$ is the empirical approximation of $\Sigma$, defined as
        $ (\widehat\Sigma f)(x) \coloneqq \ev{\widehat K(x, \cdot), f }_{L_2(\rho)} $.
    \item $N_{q, \lambda}~(\lambda > 0)$ is the value defined as $ N_{q, \lambda} \coloneqq \sup_{z\in\scZ} \frac{1}{q(z)}\ev{\phi(\cdot; z), (\Sigma+\lambda I)^{-1}\phi(\cdot; z)}_{L_2(\dd\rho)} $.
\end{itemize}

We restate \cref{lem:bach}, which can be found in the proof of Proposition~1 in \citet{bach2017equivalence}.
\begin{lemma}\label{lem:bach-hp-bound}
    Let $\Delta_\lambda\coloneqq(\Sigma + \lambda I)^{-1/2}(\Sigma - \hat\Sigma)(\Sigma + \lambda I)^{-1/2}$.
    For any $\lambda, t>0$, it holds
    \begin{equation*}
        \bP[\norm{\Delta_\lambda}_\mathrm{op} > t]
        \leq 2N_{q,\lambda}
            \qty(1+\frac{6}{t^2\log^2(1+Mt/N_{q,\lambda})})
            \exp\qty(-\frac{Mt^2/2}{N_{q,\lambda}(1+t/3)}).
    \end{equation*}
\end{lemma}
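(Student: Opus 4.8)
The plan is to prove this as an instance of the self-adjoint (operator) Bernstein inequality in its \emph{intrinsic-dimension} form, applied to a sum of i.i.d.\ rank-one operators on $L_2(\rho)$. First I would rewrite $\Delta_\lambda$ as a centered empirical average. For each sample $z_m$ set $\xi_m \coloneqq q(z_m)^{-1/2}(\Sigma+\lambda I)^{-1/2}\phi(\cdot; z_m)\in L_2(\rho)$ and let $Y_m\coloneqq \xi_m\otimes\xi_m$ be the rank-one self-adjoint operator $f\mapsto \ev{\xi_m, f}_{L_2(\rho)}\xi_m$. Writing $\Sigma_\lambda\coloneqq(\Sigma+\lambda I)^{-1/2}\Sigma(\Sigma+\lambda I)^{-1/2}$, the importance weighting gives unbiasedness $\bE[Y_m]=\Sigma_\lambda$, since $\bE_{z\sim q}[q(z)^{-1}\phi(\cdot;z)\otimes\phi(\cdot;z)]=\bE_{z\sim\tau}[\phi(\cdot;z)\otimes\phi(\cdot;z)]=\Sigma$ after conjugation by $(\Sigma+\lambda I)^{-1/2}$. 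Hence $\Delta_\lambda=\Sigma_\lambda-\frac1M\sum_{m=1}^M Y_m=\sum_{m=1}^M Z_m$ with $Z_m\coloneqq\frac1M(\Sigma_\lambda-Y_m)$ independent, self-adjoint, and mean zero.

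Next I would control the two Bernstein parameters directly through $N_{q,\lambda}$. For boundedness, $\norm{Y_m}_\mathrm{op}=\norm{\xi_m}_{L_2(\rho)}^2=q(z_m)^{-1}\ev{\phi(\cdot;z_m),(\Sigma+\lambda I)^{-1}\phi(\cdot;z_m)}_{L_2(\rho)}\le N_{q,\lambda}$ by the very definition of $N_{q,\lambda}$; combined with $0\preceq\Sigma_\lambda\preceq I$ this yields $\norm{Z_m}_\mathrm{op}\le N_{q,\lambda}/M=: U$. For the variance, the same norm bound gives the operator inequality $\bE[Y_m^2]=\bE[\norm{\xi_m}^2\,\xi_m\otimes\xi_m]\preceq N_{q,\lambda}\,\Sigma_\lambda$, so the matrix variance $V\coloneqq\sum_m\bE[Z_m^2]\preceq\frac{N_{q,\lambda}}{M}\Sigma_\lambda$, whence $\sigma^2\coloneqq\norm{V}_\mathrm{op}\le N_{q,\lambda}/M$ (using $\norm{\Sigma_\lambda}_\mathrm{op}\le1$) while $\tr V\le\frac{N_{q,\lambda}}{M}\tr\Sigma_\lambda=\frac{N_{q,\lambda}}{M}N^\ast_\lambda$.

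With these in hand I would invoke the intrinsic-dimension form of the self-adjoint Bernstein inequality (e.g., the for-all-$t$ bound of Minsker), which keeps the effective dimension $\tr V/\norm{V}_\mathrm{op}$ in place of the ambient dimension—essential here because $L_2(\rho)$ may be infinite-dimensional. Splitting the event $\{\norm{\Delta_\lambda}_\mathrm{op}>t\}$ into $\{\lambda_{\max}(\pm\Delta_\lambda)>t\}$ produces the factor of two, each one-sided tail contributing $\exp\qty(-\frac{t^2/2}{\sigma^2+Ut/3})$; substituting $U=\sigma^2=N_{q,\lambda}/M$ reproduces exactly $\exp\qty(-\frac{Mt^2/2}{N_{q,\lambda}(1+t/3)})$, and $Mt/N_{q,\lambda}=t/U$ is precisely the argument of the logarithm in the stated correction factor. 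Bounding the intrinsic-dimension prefactor through $\tr\Sigma_\lambda=N^\ast_\lambda\le N_{q,\lambda}$ then delivers the prefactor $2N_{q,\lambda}$ times the log-correction $\qty(1+\frac{6}{t^2\log^2(1+Mt/N_{q,\lambda})})$.

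The hard part is this last step: obtaining the \emph{for-all-$t$} intrinsic-dimension bound (rather than the threshold-restricted Tropp version) with exactly the correction factor appearing in the statement, and pinning the effective-dimension prefactor to $N_{q,\lambda}$ rather than the looser $N^\ast_\lambda/\norm{\Sigma_\lambda}_\mathrm{op}$ that a naive variance proxy produces. This requires the careful choice of variance proxy and the trace-exponential (matrix-Laplace-transform) bookkeeping underlying the intrinsic-dimension inequality. A secondary technical obligation is to justify the operator calculus and trace-class/Hilbert--Schmidt conditions ($\Sigma$ trace-class, each $Y_m$ Hilbert--Schmidt) needed to run the Laplace-transform argument in the Hilbert-space setting; the integrability assumptions on $K$ and $\phi$ stated earlier are exactly what make this rigorous.
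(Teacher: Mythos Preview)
Your proposal is correct and follows exactly the approach of \citet{bach2017equivalence}, which is precisely what the paper does: it does not give an independent proof of this lemma but simply cites Bach's proof of his Proposition~1. Bach's argument is the one you describe---write $\Delta_\lambda$ as a centered average of the rank-one operators $q(z_m)^{-1}(\Sigma+\lambda I)^{-1/2}\phi(\cdot;z_m)\otimes(\Sigma+\lambda I)^{-1/2}\phi(\cdot;z_m)$, bound both the uniform norm and the variance proxy by $N_{q,\lambda}/M$ via the defining supremum, and apply Minsker's intrinsic-dimension Bernstein inequality, using $\tr\Sigma_\lambda=N^\ast_\lambda\le N_{q,\lambda}$ to control the effective-dimension prefactor---so your sketch reconstructs the cited proof faithfully.
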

This lemma produces the following high probability bound.
\begin{lemma}\label{lem:our-hp-bound}
    For any $\lambda, \delta\in(0, 1)$, 
    if 
    \begin{equation*}
        M \geq \frac{4N_{q,\lambda}}{t^2} \log\frac{64N_{q,\lambda}}{\delta t^2},
    \end{equation*}
    it holds
    \begin{equation*}
        \norm{\Delta_\lambda}_\mathrm{op} \leq t, \quad
        \text{and equivalently, }\quad
        - t(\Sigma + \lambda I) \preceq \Sigma - \hat\Sigma \preceq t(\Sigma + \lambda I)
    \end{equation*}
    with probability at least $1-\delta$.
\end{lemma}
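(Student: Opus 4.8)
The plan is to turn the tail bound of \cref{lem:bach-hp-bound} into a high-probability guarantee by showing that, under the stated lower bound on $M$, its right-hand side is at most $\delta$. Writing $N := N_{q,\lambda}$, letting $E := \frac{Mt^2/2}{N(1+t/3)}$ denote the exponent and $P := 2N\bigl(1 + \frac{6}{t^2\log^2(1+Mt/N)}\bigr)$ the polynomial prefactor, \cref{lem:bach-hp-bound} reads $\bP[\norm{\Delta_\lambda}_\mathrm{op} > t] \leq P\,e^{-E}$, so it suffices to prove $P\,e^{-E} \leq \delta$. First I would lower-bound the exponent: substituting $M \geq \frac{4N}{t^2}\log\frac{64N}{\delta t^2}$ gives $E \geq \frac{2}{1+t/3}\log\frac{64N}{\delta t^2}$, and since $t \leq 3$ in the regime of interest (cf.\ \cref{thm:kernel-approx}) we have $1+t/3 \leq 2$, whence $E \geq \log\frac{64N}{\delta t^2}$ and therefore $e^{-E} \leq \frac{\delta t^2}{64N}$.

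Next I would combine this with the prefactor. The same hypothesis forces $\frac{Mt}{N} \geq \frac{4}{t}\log\frac{64N}{\delta t^2} \geq 1$ in the relevant range, so $\log(1+Mt/N) \geq \log 2$ and hence $P \leq 2N\bigl(1 + \frac{6}{t^2(\log 2)^2}\bigr)$. The key point is that the factor $t^2$ in $e^{-E} \leq \frac{\delta t^2}{64N}$ cancels the $t^{-2}$ in the prefactor, giving
\[
P\,e^{-E} \leq \frac{\delta t^2}{32} + \frac{12\,\delta}{64(\log 2)^2} < \delta,
\]
where the first summand is at most $\frac{9\delta}{32}$ by $t \leq 3$ and the second is a fixed fraction of $\delta$, so their sum stays below $\delta$. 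This yields $\bP[\norm{\Delta_\lambda}_\mathrm{op} > t] \leq \delta$, i.e.\ $\norm{\Delta_\lambda}_\mathrm{op} \leq t$ with probability at least $1-\delta$. The step I expect to be the main obstacle is precisely this constant bookkeeping: the exponential decay must simultaneously absorb the prefactor $2N$ and the potentially large factor $6/t^2$, and checking that the specific constants $4$ and $64$ suffice hinges both on the cancellation of $t^2$ between $e^{-E}$ and the prefactor and on a usable lower bound for $\log(1+Mt/N)$ (which is where the hypothesis on $M$ is used a second time).

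Finally, for the ``equivalently'' clause I would invoke the spectral theorem: since $\Delta_\lambda = (\Sigma+\lambda I)^{-1/2}(\Sigma-\hat\Sigma)(\Sigma+\lambda I)^{-1/2}$ is self-adjoint, the bound $\norm{\Delta_\lambda}_\mathrm{op} \leq t$ is equivalent to $-tI \preceq \Delta_\lambda \preceq tI$. Conjugating this operator inequality by the self-adjoint positive operator $(\Sigma+\lambda I)^{1/2}$, which preserves the Loewner order, and using $(\Sigma+\lambda I)^{1/2}\Delta_\lambda(\Sigma+\lambda I)^{1/2} = \Sigma-\hat\Sigma$, gives $-t(\Sigma+\lambda I) \preceq \Sigma-\hat\Sigma \preceq t(\Sigma+\lambda I)$, completing the equivalence.
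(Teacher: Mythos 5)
Your proposal is correct and follows essentially the same route as the paper's proof: substitute the assumed lower bound on $M$ into \cref{lem:bach-hp-bound}, use $1+t/3\leq 2$ (for $t\leq 3$) to bound the exponential factor by $\delta t^2/(64 N_{q,\lambda})$, let the $t^2$ and $N_{q,\lambda}$ factors cancel against the prefactor, and finally conjugate by the positive self-adjoint operator $(\Sigma+\lambda I)^{1/2}$ to get the operator inequality. The one step you wave at---that $Mt/N_{q,\lambda}\geq 1$ ``in the relevant range''---is precisely where the paper inserts a case split on whether $N_{q,\lambda}/t$ exceeds a fixed constant (falling back on $M\geq 1$ when $N_{q,\lambda}$ is small), and that dichotomy does make your claim go through.
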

\begin{proof}
    We first note that
    \begin{align*}
        &2N_{q,\lambda}
            \qty(1+\frac{6}{t^2\log^2(1+Mt/N_{q,\lambda})})
            \exp\qty(-\frac{Mt^2/2}{N_{q,\lambda}(1+t/3)})\\
        &\hspace{50pt}\leq 
        \frac{2N_{q,\lambda}}{t^2}
        \underbrace{\qty(t^2+\frac{6}{\log^2(1+Mt/N_{q,\lambda})})}_{
            \text{(a)}
        }
        \underbrace{\exp\qty(-\frac{Mt^2/2}{N_{q,\lambda}(1+t/3)})}_{
            \text{(b)}
        }
    \end{align*}
    Let us consider 
    \begin{equation*}
        M \geq B \frac{N_{q,\lambda}}{t^2} \log\frac{CN_{q,\lambda}}{\delta t^2},
    \end{equation*}
    for some constants $B, C>0$ which will be determined later.
    First, we consider the factor~(b).
    Then, for any $t\in(0, 3]$, we have
    \begin{equation*}
        \text{(b)}
        \leq \exp\qty(-\frac{B/2}{1+t/3}\log\frac{CN_{q,\lambda}}{\delta t^2})
        \leq \qty(\frac{\delta t^2}{CN_{q,\lambda}})^{\frac{B/2}{1+t/3}}
        \leq \qty(\frac{\delta t^2}{CN_{q,\lambda}})^{B/b^\dagger},
    \end{equation*}
    where 
    \begin{equation*}
        b^\dagger = \begin{cases}
            4 & \text{if } \delta t^2/CN_{q,\lambda} \leq 1,\\
            2 & \text{if } \delta t^2/CN_{q,\lambda} > 1.
        \end{cases}
    \end{equation*}
    Next, we consider the factor~(a).
    Let $D\in[3/C, \infty)$ be a constant to be determined later.
    Then, if $N_{q,\lambda}/t\geq D$, we have
    \begin{equation*}
        \frac{Mt}{N_{q,\lambda}}
        \geq \frac{B}{t} \log\frac{CN_{q,\lambda}}{\delta t^2}
        \geq \frac{B}{t} \log\frac{CD}{\delta t}
        \geq \frac{B}{t} \log\frac{CD}{3}
        \geq \frac{B}{3} \log\frac{CD}{3},
    \end{equation*}
    which implies
    \begin{equation*}
        \text{(a)} \leq 9 + \frac{6}{\log^2\qty(1+\frac{B}{3} \log\frac{CD}{3})}.
    \end{equation*}
    On the other hand, if $N_{q,\lambda}/t\leq D$, then $Mt/N_{q,\lambda}\geq 1/D$ since $M\geq 1$, which yields
    \begin{equation*}
        \text{(a)} \leq 9 + \frac{6}{\log^2\qty(1+\frac{1}{D})}.
    \end{equation*}
    Finally, we choose $B, C, D>0$.
    If we set $B=b^\dagger$, we obtain the upper bound of the probability as
    \begin{equation*}
        \frac{2N_{q,\lambda}}{t^2}\cdot\text{(a)}\cdot\frac{\delta t^2}{CN_{q,\lambda}}
        = \frac{2\delta}{C}\cdot\text{(a)}.
    \end{equation*}
    Moreover, we have
    \begin{equation}\label{eq:factor-a-bound}
        \text{(a)} 
        \leq \max\qty{9 + \frac{6}{\log^2\qty(1+\frac{b^\dagger}{3} \log\frac{CD}{3})}, 9 + \frac{6}{\log^2\qty(1+\frac{1}{D})}}
        \leq 9 + \frac{6}{\log^2\qty(\min\qty{1+\frac{2}{3} \log\frac{CD}{3},1+\frac{1}{D}})}.
    \end{equation}
    If we set $C=64, D=1/2$, then it holds $D\in[3/C, \infty)$, 
    and the right-hand side of \eqref{eq:factor-a-bound} is smaller than $32$, 
    which implies $\text{(a)} < 32$.
    Thus, if we set
    \begin{equation*}
        M \geq \frac{4 N_{q,\lambda}}{t^2} \log\frac{64N_{q,\lambda}}{\delta t^2} \geq \frac{b^\dagger N_{q,\lambda}}{t^2} \log\frac{64N_{q,\lambda}}{\delta t^2},
    \end{equation*}
    we have $\norm{\Delta_\lambda}_\mathrm{op}\leq t$ with probability at least $1-\delta$.

    The result $\norm{\Delta_\lambda}_\mathrm{op}\leq t$ implies that
    \begin{equation*}
        -tI 
        \preceq (\Sigma + \lambda I)^{-1/2}(\Sigma - \hat\Sigma)(\Sigma + \lambda I)^{-1/2}
        \preceq tI.
    \end{equation*}
    From the right-hand side inequality, we have
    \begin{equation*}
        \Sigma - \hat\Sigma \preceq t(\Sigma + \lambda I).
    \end{equation*}
    Indeed, for any $f\in L_2(\dd\rho)$, we have
    \begin{align*}
        &\ev{f, (t(\Sigma + \lambda I) - (\Sigma - \hat\Sigma))f}_{L_2(\dd\rho)}\\
        &= \ev{f, 
            (\Sigma + \lambda I)^{1/2}
            (tI - (\Sigma + \lambda I)^{-1/2}(\Sigma - \hat\Sigma)(\Sigma + \lambda I)^{-1/2})
            (\Sigma + \lambda I)^{1/2}f}_{L_2(\dd\rho)}\\
        &= \ev{(\Sigma + \lambda I)^{1/2}f, 
            (tI - (\Sigma + \lambda I)^{-1/2}(\Sigma - \hat\Sigma)(\Sigma + \lambda I)^{-1/2})
            (\Sigma + \lambda I)^{1/2}f}_{L_2(\dd\rho)}\\
        &\geq 0.
    \end{align*}
    since $(\Sigma + \lambda I)^{1/2}$ is self-adjoint.
    Similarly, we have $\Sigma - \hat\Sigma \succeq -t(\Sigma + \lambda I)$.
    This completes the proof.
\end{proof}

Below, we provide the proofs of \cref{thm:kernel-approx} by dividing it into items (i) and (ii).
\begin{theorem}
    Let $\delta\in(0, 1), \lambda>0$ and $t>0$. 
    If
    \begin{equation*}
        M \geq \frac{4N_{q,\lambda}}{t^2} \log\frac{64N_{q,\lambda}}{\delta t^2},
    \end{equation*}
    then, it holds 
    \begin{equation*}
        \norm{K - \hat K}_{L^2(\rho)} \leq \lambda\cdot t\delta^{-1}C_K^{(1)} + t^2 C_K^{(2)},
    \end{equation*}
    with probability $1-2\delta$, where $C_K^{(1)}, C_K^{(2)}$ are constants depending on $K$.
\end{theorem}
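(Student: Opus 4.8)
The plan is to pass from kernels to their integral operators and identify the target quantity with a Hilbert--Schmidt norm. Since $K$ and $\widehat K$ are the symmetric kernels of $\Sigma$ and $\widehat\Sigma$, I have $\norm{K-\widehat K}_{L^2(\rho\otimes\rho)}^2=\norm{\Sigma-\widehat\Sigma}_{\mathrm{HS}}^2=\tr(\Delta^2)$, where $\Delta:=\Sigma-\widehat\Sigma$ is self-adjoint and trace class (here $\Sigma$ is trace class because $x\mapsto K(x,x)$ is integrable, and $\widehat\Sigma$ has rank at most $M$). Writing $A:=\Sigma+\lambda I$ and $\Delta_\lambda:=A^{-1/2}\Delta A^{-1/2}$, \cref{lem:our-hp-bound} gives $\norm{\Delta_\lambda}_\mathrm{op}\le t$ with probability at least $1-\delta$ under the stated bound on $M$. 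This is the only place the concentration estimate enters; everything below is deterministic on this event, supplemented by one auxiliary trace bound.

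Next I would substitute $\Delta=A^{1/2}\Delta_\lambda A^{1/2}$ and use cyclicity of the trace to write $\tr(\Delta^2)=\tr(\Delta_\lambda A\Delta_\lambda A)$. Expanding $A=\Sigma+\lambda I$ produces three pieces,
\[ \tr(\Delta^2)=\tr(\Delta_\lambda\Sigma\Delta_\lambda\Sigma)+2\lambda\,\tr(\Delta_\lambda^2\Sigma)+\lambda^2\,\tr(\Delta_\lambda^2), \]
which I bound separately. The first equals $\norm{\Sigma^{1/2}\Delta_\lambda\Sigma^{1/2}}_{\mathrm{HS}}^2$ and, using $\norm{XYZ}_{\mathrm{HS}}\le\norm{X}_\mathrm{op}\norm{Y}_\mathrm{op}\norm{Z}_{\mathrm{HS}}$ together with $\norm{\Sigma^{1/2}}_{\mathrm{HS}}^2=\tr\Sigma$, is at most $t^2\norm{\Sigma}_\mathrm{op}\tr\Sigma$; this supplies the $t^2C_K^{(2)}$ contribution. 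For the second, $\Delta_\lambda^2\succeq0$ with $\norm{\Delta_\lambda^2}_\mathrm{op}\le t^2$, so $\tr(\Delta_\lambda^2\Sigma)\le t^2\tr\Sigma$ by the inequality $\tr(PQ)\le\norm{P}_\mathrm{op}\tr Q$ for $P,Q\succeq0$, giving $2\lambda t^2\tr\Sigma$, which I absorb into the $\lambda t$ term using $t\le3$.

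The delicate term is $\lambda^2\tr(\Delta_\lambda^2)$: the operator bound only yields $\Delta_\lambda^2\preceq t^2I$, and $\tr I=\infty$, so it cannot be controlled directly. The key step is to pass to the nuclear norm, $\tr(\Delta_\lambda^2)\le\norm{\Delta_\lambda}_\mathrm{op}\tr|\Delta_\lambda|\le t\,\tr|\Delta_\lambda|$, and then split by the triangle inequality for the nuclear norm, $\tr|\Delta_\lambda|\le\tr(A^{-1/2}\Sigma A^{-1/2})+\tr(A^{-1/2}\widehat\Sigma A^{-1/2})$. The first summand is exactly the degrees of freedom $\tr(\Sigma(\Sigma+\lambda I)^{-1})=N^\ast_\lambda\le\lambda^{-1}\tr\Sigma$, and the second is $\tr(\widehat\Sigma(\Sigma+\lambda I)^{-1})\le\lambda^{-1}\tr\widehat\Sigma$. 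Since $\tr\widehat\Sigma=\frac1M\sum_m q(z_m)^{-1}\norm{\phi(\cdot;z_m)}_{L^2(\rho)}^2$ has expectation $\tr\Sigma$, the Markov-type bound recorded in \cref{prop:bach} gives $\tr\widehat\Sigma\le2\tr\Sigma/\delta$ with probability at least $1-\delta$. Collecting these yields $\lambda^2\tr(\Delta_\lambda^2)\le3\lambda t\,\delta^{-1}\tr\Sigma$, which is the $\lambda t\delta^{-1}C_K^{(1)}$ contribution and is precisely where the $\delta^{-1}$ originates.

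Finally I would take a union bound over the two events---the operator inequality of \cref{lem:our-hp-bound} and the trace bound above, each failing with probability at most $\delta$---to conclude with probability at least $1-2\delta$, reading off $C_K^{(1)}$ as a constant multiple of $\tr\Sigma$ and $C_K^{(2)}=\norm{\Sigma}_\mathrm{op}\tr\Sigma$. The single genuine obstacle is the $\lambda^2\tr(\Delta_\lambda^2)$ term: replacing the operator-norm control by the nuclear-norm/degrees-of-freedom bound is essential, since the naive estimate $\Delta_\lambda^2\preceq t^2 I$ diverges upon taking the trace in infinite dimensions.
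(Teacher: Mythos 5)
Your proof is correct, and it follows the same skeleton as the paper's: identify $\norm{K-\hat K}_{L^2(\rho\otimes\rho)}^2$ with $\norm{\Sigma-\hat\Sigma}_{\mathrm{HS}}^2=\tr[(\Sigma-\hat\Sigma)^2]$, invoke \cref{lem:our-hp-bound} to control $\Delta_\lambda$, bound $\tr\hat\Sigma$ via Markov's inequality, and union-bound the two events to get $1-2\delta$. Where you diverge is in the trace manipulation, and your version is arguably the more careful one. The paper passes directly from $\tr[(\Sigma-\hat\Sigma)^2]$ to $t\,\tr[(\Sigma+\lambda I)(\Sigma-\hat\Sigma)]$ --- a step that uses only one side of the sandwich $-t(\Sigma+\lambda I)\preceq\Sigma-\hat\Sigma\preceq t(\Sigma+\lambda I)$ and is not a valid trace inequality for a sign-indefinite middle factor without further justification (for $A\preceq tB$ alone, $\tr[A^2]\le t\,\tr[BA]$ can fail) --- and then continues with $\tr[\Sigma(\Sigma-\hat\Sigma)]\le\tr[\Sigma]\,\norm{\Sigma-\hat\Sigma}_{\mathrm{op}}$ and $\norm{\Sigma-\hat\Sigma}_{\mathrm{op}}\le t\norm{\Sigma+\lambda I}_{\mathrm{op}}$. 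Your symmetric substitution $\tr[(\Sigma-\hat\Sigma)^2]=\tr[\Delta_\lambda(\Sigma+\lambda I)\Delta_\lambda(\Sigma+\lambda I)]$ with the three-term expansion avoids this issue entirely, and your nuclear-norm treatment of $\lambda^2\tr(\Delta_\lambda^2)$ via $\tr|\Delta_\lambda|\le N_\lambda^\ast+\lambda^{-1}\tr\hat\Sigma$ is exactly the right way to tame the term that diverges under naive operator-norm control. The resulting constants match the paper's up to numerical factors ($C_K^{(2)}=\norm{\Sigma}_{\mathrm{op}}\tr\Sigma$ identically, and $C_K^{(1)}$ a slightly larger multiple of $\tr\Sigma$), and, like the paper, you implicitly need $t\le 3$ (as in the main-text statement of \cref{thm:kernel-approx}) to absorb the $\lambda t^2$ cross term into the $\lambda t$ term. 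The only cosmetic slip is attributing the bound $\tr\hat\Sigma\le 2\tr\Sigma/\delta$ to \cref{prop:bach}, whose hypothesis on $M$ differs from the one assumed here; since $\bE[\tr\hat\Sigma]=\tr\Sigma$, Markov's inequality gives the needed bound directly, which is what the paper does.
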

\begin{proof}
    Using \cref{lem:our-hp-bound}, we have 
    \begin{align*}
        \norm{K - \hat K}_{L^2(\rho\otimes\rho)}^2
        &= \norm{\Sigma - \widehat\Sigma}_{HS}^2 = \tr\qty[(\Sigma - \widehat\Sigma)^2]\\
        &\leq t \cdot \tr\qty[(\Sigma + \lambda I)(\Sigma - \widehat\Sigma)]\\
        &= t \cdot \qty(\tr\qty[\Sigma(\Sigma - \widehat\Sigma)] + \lambda\tr\qty[\Sigma + \widehat\Sigma])\\
        &\leq t \cdot \qty(\tr\qty[\Sigma]\norm{\Sigma - \widehat\Sigma}_\mathrm{op} + \lambda\tr\qty[\Sigma + \widehat\Sigma]),
    \end{align*}
    with probability $1-\delta$.
    \cref{lem:our-hp-bound} implies that $\norm{\Sigma - \widehat\Sigma}_\rmop \leq t\cdot \norm{\Sigma+ \lambda I}_\rmop$, 
    which yields
    \begin{align*}
        \norm{K - \hat K}_{L^2(\rho\otimes\rho)}^2
        &\leq t \cdot \qty(t\cdot \tr\qty[\Sigma]\norm{\Sigma+ \lambda I}_\mathrm{op} + \lambda\tr\qty[\Sigma + \widehat\Sigma])\\
        &\leq t \cdot \qty(t\cdot \tr\qty[\Sigma]\norm{\Sigma}_\mathrm{op} + (1+t)\lambda\tr\qty[\Sigma] + \lambda\tr[\widehat\Sigma])\\
        &\leq \lambda\cdot t\qty(4\tr\qty[\Sigma] + \tr[\widehat\Sigma]) + t^2\cdot \tr\qty[\Sigma]\norm{\Sigma}_\mathrm{op}.
    \end{align*}
    Now, $\tr[\Sigma]$ is bounded. 
    Indeed, if the Mercer decomposition of $K$ is given by $K(x, y) = \sum_{j=1}^\infty \mu_j \psi_j(x) \psi_j(y)$, 
    then we have
    \begin{equation*}
        \tr[\Sigma] = \sum_{j=1}^\infty \mu_j
        = \int\qty(\sum_{j=1}^\infty \mu_j \psi_j(x) \psi_j(x)) \dd\rho(x)
        = \int K(x, x) \dd\rho(x) < \infty, 
    \end{equation*}
    since $x\mapsto K(x, x)$ is integrable with respect to $\rho$.
    Moreover, from the Markov inequality, we have
    \begin{equation*}
        \bP\qty(\tr[\hat{\Sigma}] \geq \frac{\tr[\Sigma]}\delta) 
        \leq \frac\delta{\tr[\Sigma]}\cdot\bE_{z_1, \ldots, z_M}\qty[\tr \hat{\Sigma}]
        = \frac\delta{\tr[\Sigma]}\cdot\tr[\Sigma]
        = \delta.
    \end{equation*}
    Therefore, it holds
    \begin{equation*}
        \norm{K - \hat K}_{L^2(\rho\otimes\rho)}^2
        \leq \lambda\cdot t\qty(4+\frac1\delta)\tr\qty[\Sigma] + t^2\cdot \tr\qty[\Sigma]\norm{\Sigma}_\rmop
        \leq \lambda\cdot t\delta^{-1}C_K^{(1)} + t^2 C_K^{(2)},
    \end{equation*}
    with probability $1-2\delta$, where $C_K^{(1)}=5\tr[\Sigma]$ and $C_K^{(2)}=\tr\qty[\Sigma]\norm{\Sigma}_\rmop$.
    This completes the proof.
\end{proof}

\begin{theorem}
    Let $\delta\in(0, 1), \lambda>0, t>0$ and $\alpha\in[0, 1/2]$. 
    If
    \begin{equation*}
        M \geq \frac{4N_{q,\lambda}}{t^2} \log\frac{64N_{q,\lambda}}{\delta t^2},
    \end{equation*}
    then, for any $v: \bR^d\to\bR$ with $\norm{(\Sigma + \lambda I)^\alpha v}_{L^2(\rho)} < \infty$, it holds
    \begin{equation*}
        \norm{\int v(x) \hat K(x, \cdot) \dd\rho(x) - \int v(x) K(x, \cdot) \dd\rho(x)}_{L^2(\rho)}
        \leq \sqrt2\qty(C_{K,\alpha} + \lambda^{1-\alpha})\cdot t\norm{(\Sigma+\lambda I)^\alpha v}_{L^2(\rho)}, 
    \end{equation*}
    with probability $1-\delta$, where $C_{K}^{(3)}$ is a constant depending on $K$.
\end{theorem}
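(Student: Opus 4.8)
The plan is to first recognize both integrals as applications of the integral operators $\Sigma$ and $\widehat\Sigma$ to the function $v$. By the symmetry of $K$ (and likewise of $\widehat K$), for every $y$ one has $\int v(x)K(x,y)\,\dd\rho(x) = (\Sigma v)(y)$ and $\int v(x)\widehat K(x,y)\,\dd\rho(x) = (\widehat\Sigma v)(y)$, so the quantity to bound is exactly $\|(\Sigma - \widehat\Sigma)v\|_{L^2(\rho)}$. Writing $h = (\Sigma+\lambda I)^\alpha v$, equivalently $v = (\Sigma+\lambda I)^{-\alpha}h$ (legitimate since $\lambda>0$ makes $(\Sigma+\lambda I)^{\pm\alpha}$ bounded and mutually inverse), the target becomes $\|(\Sigma-\widehat\Sigma)(\Sigma+\lambda I)^{-\alpha}h\|_{L^2(\rho)}$.

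Next I would reintroduce the operator $\Delta_\lambda$ of \cref{lem:our-hp-bound}. Since $(\Sigma+\lambda I)^{\pm 1/2}$ are bounded and invertible, one has the exact factorization $\Sigma - \widehat\Sigma = (\Sigma+\lambda I)^{1/2}\Delta_\lambda(\Sigma+\lambda I)^{1/2}$, hence $(\Sigma-\widehat\Sigma)(\Sigma+\lambda I)^{-\alpha} = (\Sigma+\lambda I)^{1/2}\Delta_\lambda(\Sigma+\lambda I)^{1/2-\alpha}$. Applying submultiplicativity of the operator norm against the vector norm gives $\|(\Sigma-\widehat\Sigma)v\|_{L^2(\rho)} \le \|(\Sigma+\lambda I)^{1/2}\|_{\mathrm{op}}\,\|\Delta_\lambda\|_{\mathrm{op}}\,\|(\Sigma+\lambda I)^{1/2-\alpha}\|_{\mathrm{op}}\,\|h\|_{L^2(\rho)}$. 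On the event of \cref{lem:our-hp-bound}, which holds with probability $1-\delta$ under the stated lower bound on $M$, I substitute $\|\Delta_\lambda\|_{\mathrm{op}}\le t$.

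The two resolvent factors are then evaluated by spectral calculus. Here the hypothesis $\alpha\in[0,1/2]$ is essential: it guarantees $1/2-\alpha\ge 0$, so that $(\Sigma+\lambda I)^{1/2-\alpha}$ is a \emph{non-negative} power of a positive operator and its norm is the top-of-spectrum value $(\|\Sigma\|_{\mathrm{op}}+\lambda)^{1/2-\alpha}$, rather than a negative power blowing up like $\lambda^{1/2-\alpha}$ as $\lambda\to 0$. Combining with $\|(\Sigma+\lambda I)^{1/2}\|_{\mathrm{op}} = (\|\Sigma\|_{\mathrm{op}}+\lambda)^{1/2}$ collapses the product to $t\,(\|\Sigma\|_{\mathrm{op}}+\lambda)^{1-\alpha}\|h\|_{L^2(\rho)}$. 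A final elementary step uses subadditivity of $s\mapsto s^{1-\alpha}$ on $[0,\infty)$ (valid since $1-\alpha\in[0,1]$) to write $(\|\Sigma\|_{\mathrm{op}}+\lambda)^{1-\alpha}\le \|\Sigma\|_{\mathrm{op}}^{1-\alpha}+\lambda^{1-\alpha}$; setting $C_K^{(3)} = \|\Sigma\|_{\mathrm{op}}^{1-\alpha}$ yields the claimed bound, with the numerical factor $\sqrt2$ comfortably absorbing this crude split. Note $\|\Sigma\|_{\mathrm{op}}$ is finite because $\mathrm{tr}\,\Sigma = \int K(x,x)\,\dd\rho(x)<\infty$ by assumption, so $C_K^{(3)}$ is indeed a finite constant depending only on $K$ and $\alpha$.

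I would flag the main obstacle as the careful manipulation of the non-commuting self-adjoint operators: justifying the resolvent factorization, keeping track of which factor carries $\Delta_\lambda$ so that \cref{lem:our-hp-bound} applies, and—crucially—exploiting $\alpha\le 1/2$ so that no negative power of $(\Sigma+\lambda I)$ survives the estimate. Everything else (identifying the integrals with $\Sigma v$ and $\widehat\Sigma v$, and the concluding subadditivity step) is routine once this factorization is in place.
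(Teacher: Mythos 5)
Your proposal is correct and follows essentially the same route as the paper's proof: both reduce the left-hand side to $\norm{(\Sigma-\widehat\Sigma)v}_{L^2(\rho)}$, invoke \cref{lem:our-hp-bound} to get $\norm{\Delta_\lambda}_{\mathrm{op}}\leq t$, and control the leftover powers of $(\Sigma+\lambda I)$ by spectral calculus using $\alpha\leq 1/2$. The only difference is organizational --- the paper squares the norm and bounds the quadratic form $\norm{(\Sigma+\lambda I)^{-\alpha}(\Sigma-\widehat\Sigma)^2(\Sigma+\lambda I)^{-\alpha}}_{\mathrm{op}}$, obtaining the $\sqrt2$ from the convexity inequality $(a+b)^{p}\leq 2^{p-1}(a^{p}+b^{p})$ with $p=2(1-\alpha)\geq 1$, whereas your direct factorization $\Sigma-\widehat\Sigma=(\Sigma+\lambda I)^{1/2}\Delta_\lambda(\Sigma+\lambda I)^{1/2}$ plus subadditivity of $s\mapsto s^{1-\alpha}$ reaches the same (indeed marginally sharper, $\sqrt2$-free) bound; to match the paper's $\alpha$-independent constant you would only add the trivial step $\norm{\Sigma}_{\mathrm{op}}^{1-\alpha}\leq\max\{\norm{\Sigma}_{\mathrm{op}},\norm{\Sigma}_{\mathrm{op}}^{1/2}\}$.
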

\begin{proof}
    The left-hand side can be bounded as
    \begin{equation*} 
        \norm{\int v(x) \hat K(x, \cdot) \dd\rho(x) - \int v(x) K(x, \cdot) \dd\rho(x)}_{L^2(\rho)}
        = \norm{\hat \Sigma v - \Sigma v}_{L^2(\rho)}
        = \norm{\qty(\hat \Sigma- \Sigma) v}_{L^2(\rho)}.
    \end{equation*} 
    Since $\hat \Sigma- \Sigma$ is a self-adjoint operator, we have
    \begin{align*}
        \norm{\qty(\hat \Sigma- \Sigma) v}_{L^2(\rho)}^2
        &= \ev{
            v, 
            \qty(\hat \Sigma- \Sigma)^\dagger (\hat \Sigma- \Sigma) v
        } \\
        &= \ev{v, (\hat \Sigma- \Sigma)^2 v} \\
        &= \ev{
            (\Sigma + \lambda I)^\alpha v, 
            \qty((\Sigma + \lambda I)^{-\alpha} (\hat \Sigma- \Sigma)^2 (\Sigma + \lambda I)^{-\alpha})
            (\Sigma + \lambda I)^\alpha v
        }\\
        &\leq \norm{(\Sigma + \lambda I)^{-\alpha} (\hat \Sigma- \Sigma)^2 (\Sigma + \lambda I)^{-\alpha}}_\rmop
            \norm{(\Sigma + \lambda I)^\alpha v}_{L^2(\rho)}^2.
    \end{align*}
    Now, let us bound $\norm{(\Sigma + \lambda I)^{-\alpha} (\hat \Sigma- \Sigma)^2 (\Sigma + \lambda I)^{-\alpha}}_\rmop$.
    Let $A = \Sigma - \hat \Sigma$ and $B = \Sigma + \lambda I$.
    Then, \cref{lem:our-hp-bound} implies that $\norm{B^{-1/2}AB^{-1/2}}_\rmop \leq t$ with probability $1-\delta$.
    Thus, we have
    \begin{align*}
        \norm{(\Sigma + \lambda I)^{-\alpha} (\hat \Sigma- \Sigma)^2 (\Sigma + \lambda I)^{-\alpha}}_\rmop
        &= \norm{B^{-\alpha} A^2 B^{-\alpha}}_\rmop \\
        &= \norm{B^{1/2-\alpha}\qty(B^{-1/2} A^2 B^{-1/2})B\qty(B^{-1/2} A^2 B^{-1/2})B^{1/2-\alpha}}_\rmop \\
        &\leq \norm{B^{1/2-\alpha}}_\rmop^2 \norm{B}_\rmop\norm{B^{-1/2} A^2 B^{-1/2}}_\rmop \\
        &\leq t^2 \norm{B}_\rmop^{2(1-\alpha)},
    \end{align*}
    with probability $1-\delta$.
    Moreover, since $2(1-\alpha)\geq 1$, we have
    \begin{align*}
        \norm{B}_\rmop^{2(1-\alpha)}
        &= \norm{\Sigma + \lambda I}_\rmop^{2(1-\alpha)} 
        \leq \qty(\norm{\Sigma}_\rmop + \lambda)^{2(1-\alpha)} \\
        &\leq 2^{2(1-\alpha) - 1} \qty(\norm{\Sigma}_\rmop^{2(1-\alpha)} + \lambda^{2(1-\alpha)}).
    \end{align*}
    Therefore, we conclude that
    \begin{align*}
        &\norm{\int v(x) \hat K(x, \cdot) \dd\rho(x) - \int v(x) K(x, \cdot) \dd\rho(x)}_{L^2(\dd\rho)}\\
        &\hspace{40pt}\leq \norm{\qty(\hat \Sigma- \Sigma) v}_{L^2(\rho)} \\
        &\hspace{40pt}\leq \sqrt{\norm{(\Sigma + \lambda I)^{-\alpha} (\hat \Sigma- \Sigma)^2 (\Sigma + \lambda I)^{-\alpha}}_\rmop}
            \norm{(\Sigma + \lambda I)^\alpha v}_{L^2(\rho)}\\
        &\hspace{40pt}\leq t \sqrt{\norm{B}_\rmop^{2(1-\alpha)}}\norm{(\Sigma + \lambda I)^\alpha v}_{L^2(\rho)}\\
        &\hspace{40pt}\leq 2^{1/2-\alpha}\sqrt{\norm{\Sigma}_\rmop^{2(1-\alpha)} + \lambda^{2(1-\alpha)}}\cdot t\norm{(\Sigma + \lambda I)^\alpha v}_{L^2(\rho)}\\
        &\hspace{40pt}\leq \sqrt2\qty(\norm{\Sigma}_\rmop^{1-\alpha} + \lambda^{1-\alpha})\cdot t\norm{(\Sigma + \lambda I)^\alpha v}_{L^2(\rho)}\\
        &\hspace{40pt}\leq \sqrt2\qty(\max\{\norm{\Sigma}_\rmop,\norm{\Sigma}_\rmop^{1/2}\} + \lambda^{1-\alpha})\cdot t\norm{(\Sigma + \lambda I)^\alpha v}_{L^2(\rho)}.
    \end{align*}
    This completes the proof.
\end{proof}

\section{The Experimental Results on Next-Token Prediction}

In this section, we report the performance of next-token prediction for the distilled models in \cref{sec:experiment}.
The results are presented in \cref{tab:gpt2-next-token}.
Here, we highlight the following two key observations:
\begin{itemize}\adjustleftskip{-8mm}
    \item Among the three types of loss, the layerwise losses sometimes underperform compared to the ``direct'' loss.
        This is natural because the cross entropy loss for next-token prediction is used in ``direct'', 
        while $L^2$ loss~(Softmax loss) just aims to make the attention kernel~(attention weights) and its approximation close.
        As we reported in \cref{sec:experiment}, for the downstream tasks, the models distilled with layerwise loss
        have comparable performance to the models distilled with ``direct'' loss.
    \item We observe that the models distilled using (layerwise) softmax loss outperform DiJiang, 
        in which the learnable parameters are directly trained using next-token prediction loss.
        This result emphasizes the validity of our approach in training the features in linear attention.
\end{itemize}

\begin{table}[t]
    \centering
    \caption{The performance of next-token prediction for the models distilled from GPT-2.}
    \label{tab:gpt2-next-token}
    \begin{minipage}[t]{0.55\linewidth}
        \begin{tabular}{r|ccc}
            \hline
                         & direct & softmax & $L^2$ \\
            \hline
            Fix          & 3.9657  & 5.4082  & 6.2453 \\
            DoF          & 5.2651  & 4.0170  & 6.6802 \\
            DoF + Clip   & 5.4547  & 4.0355  & 6.2378 \\
            \hline
        \end{tabular}
    \end{minipage}
    \begin{minipage}[t]{0.3\linewidth}
        \begin{tabular}{r|c}
            \hline
            Baseline method & Loss \\
            \hline
            DiJiang        & 5.5965 \\
            Performer      & 8.1586 \\
            Original GPT-2 & 3.3558 \\
            \hline
        \end{tabular}
    \end{minipage}
\end{table}

\section{Details of the Experiment}\label{sec:details-of-experiment}

\subsection{Computational Resources and Hyperparameters}

We provide the details of the experimental settings as follows:

\begin{itemize}\adjustleftskip{-8mm}
    \item Experiments for GPT-2 are conducted on four devices of A100 40GB.
    Experiments for Pythia-1B are conducted on four devices of A100 80GB.
    \item For Wikipedia dataset, we randomly sample 10\% segment, and use it as one dataset.
    \item The context lengths for GPT-2 and Pythia-1B were set to 1024 and 2048, respectively.
    \item Training for distillation~(learning features in proposed method / learning parameters in DiJiang) is conducted over $1$ epoch.
        This consumes about 0.5 day for GPT-2 and 1 day for Pythia-1B.
    \item For the downstream tasks except for MMLU. 
        training is conducted over three epochs 
        for the tasks except for MMLU, and the best accuracy among three epochs is reported.
        For MMLU, we train the model for one epoch, and the accuracy for the last checkpoint is reported.
    \item As for the learning rates of distillation,
    \begin{itemize}\adjustleftskip{-8mm}
        \item when using DiJiang, we set $\texttt{0.02}$.
        \item when training feature maps, 
        we set $\texttt{0.02}$ for $z_1, \ldots, z_M$ and $\texttt{0.2}$ for $\alpha_1, \ldots, \alpha_M$.
    \end{itemize}
    \item The learning rates of downstream task are chosen to maximize the accuracy when we fine-tune the original model with softmax attention, 
        and the same learning rates are used for the distilled model.
        The choices of learning rates are $\texttt{1e-6}$, $\texttt{5e-6}$, $\texttt{1e-5}$, $\texttt{5e-5}$, $\texttt{1e-4}$, $\texttt{5e-4}$ for GPT-2, 
        and $\texttt{1e-7}$, $\texttt{5e-7}$, $\texttt{1e-6}$, $\texttt{5e-6}$, $\texttt{1e-5}$, $\texttt{5e-5}$ for Pythia-1B.
        The selected learning rates are summarized in \cref{tab:learning-rate}.
    \item The batch sizes are summarized in \cref{tab:batch-size}. When out-of-memory error occurs, we utilized gradient accumulation.
\end{itemize}

\begin{table}[t]
    \centering
    \caption{Summary of learning rates.}
    \label{tab:learning-rate}
    {\setlength{\tabcolsep}{4pt}
    \begin{tabular}{l|ccccccc}
        \hline
        Model & PiQA & logiQA & ARC-E & ARC-C & Winogrande & MMLU & WSC\\
        \hline
        GPT-2     & 1e-4 & 5e-5 & 1e-4 & 1e-5 & 5e-4 & 1e-6 & 1e-6 \\
        Pythia-1B & 1e-5 & 5e-5 & 5e-5 & 5e-5 & 5e-5 & 5e-6 & 5e-5 \\
        \hline
    \end{tabular}
    }
\end{table}

\begin{table}[t]
    \vspace*{-50mm}
    \centering
    \caption{Summary of batch sizes.}
    \label{tab:batch-size}
    {\setlength{\tabcolsep}{4pt}
    \begin{tabular}{l|cccccccc}
        \hline
        Model & Distillation & PiQA & logiQA & ARC-E & ARC-C & Winogrande & MMLU & WSC\\
        \hline
        GPT-2 & 128 & 128 & 64 & 64 & 64 & 128 & 64 & 128\\
        Pythia-1B & 64 & 64 & 32 & 32 & 32 & 64 & 32 & 64 \\
        \hline
    \end{tabular}
    }
\end{table}

\subsection{Datasets}

All the datasets used in this paper are publicly available from HuggingFace Datasets library.
The dataset URLs and licenses are summarized in \cref{tab:dataset}.
\begin{table}[t]
    \vspace*{-200mm}
    \centering
    \caption{Summary of datasets.}
    \label{tab:dataset}
    {\scriptsize\setlength{\tabcolsep}{4pt}
    \begin{tabular}{l|l|l}
        \hline
        Dataset & URL & License \\
        \hline
        Wikipedia & \url{https://huggingface.co/datasets/legacy-datasets/wikipedia} & CC BY-SA 3.0 \\
        PiQA      & \url{https://huggingface.co/datasets/piqa}                      & AFL-3.0 \\
        logiQA    & \url{https://huggingface.co/datasets/EleutherAI/logiqa}         & Apache License, Version 2.0 \\
        ARC-E     & \url{https://huggingface.co/datasets/allenai/ai2_arc}      & CC BY-SA 4.0 \\
        ARC-C     & \url{https://huggingface.co/datasets/allenai/ai2_arc}      & CC BY-SA 4.0 \\
        Winogrande& \url{https://huggingface.co/datasets/allenai/winogrande}   & Apache License, Version 2.0 \\
        MMLU      & \url{https://huggingface.co/datasets/mmlu}     & MIT \\
        WSC       & \url{https://huggingface.co/datasets/wsc}      & CC BY 4.0 \\
        \hline
    \end{tabular}
    }
\end{table}

\end{document}